\documentclass{article}
\usepackage[margin=.75in]{geometry}

\usepackage{graphicx}
\usepackage{amsmath}
\usepackage{amssymb}
\usepackage{amsthm}
\usepackage{booktabs}
\usepackage{enumitem}
\usepackage{hyperref}
\usepackage{url}
\usepackage{microtype}
\usepackage{wrapfig}

\PassOptionsToPackage{numbers,sort&compress}{natbib}
\usepackage{natbib}

\usepackage[table]{xcolor}
\definecolor{ourscolor}{RGB}{230,243,255}  
\definecolor{mydarkblue}{RGB}{22,55,92}
\usepackage[most]{tcolorbox}

  \newtcolorbox{summarybox}{
    colback=ourscolor,
    colframe=mydarkblue,
    boxrule=1pt,
    arc=3pt,
    left=6pt,
    right=6pt,
    top=6pt,
    bottom=6pt,
  }

\newtheorem{theorem}{Theorem}
\newtheorem{proposition}{Proposition}
\newtheorem{corollary}{Corollary}
\newtheorem{lemma}{Lemma}
\newtheorem{definition}{Definition}
\newtheorem{remark}{Remark}

\newcommand{\polar}[1]{\mathrm{polar}(#1)}

\title{The Spectral Dynamics and Noise Geometry of Muon}

\author{%
  Pierfrancesco Beneventano \quad Mahmoud Abdelmoneum \quad Tomaso Poggio \\
  Massachusetts Institute of Technology\\
  \texttt{$\{$pierb,mabdel03$\}$@mit.edu}
}

\begin{document}

\maketitle

\begin{summarybox}
\centering
\textit{\color{mydarkblue}\textbf{Paper written by prompting pAI/MSc~\citep{abdelmoneum2026paimsc}.}}\\
\textit{\color{mydarkblue}\textbf{No parts were written, thought of, or performed by humans except Section \ref{sec:pai}.}}
\end{summarybox}

\begin{abstract}
Muon replaces a matrix gradient \(G=U\Sigma V^\top\) by its polar factor \(UV^\top\). This keeps the singular directions selected by the gradient, but makes the update spectrum flat. We study the optimization bias created by this operation. Under explicit alignment assumptions, we prove that the polar update is the one-step entropy-maximizing choice among bounded updates that use the gradient singular directions and do not adapt to the current weight spectrum. In an underdetermined regression model, we derive exact singular-value dynamics for continuous-time Muon and identify a measurement-dependent condition under which the normalized spectrum moves toward equal nonzero singular values. This geometry also rules out a common low-rank interpretation: at fixed Frobenius norm, Muon's distinguished state has a flat spectrum, whereas nuclear-norm minimization favors spectral concentration. Controlled matrix-sensing experiments separate the effect from simple gradient rescaling, show that norm-matched gradient descent does not reproduce Muon, and recover the predicted flattening trend across broad ablations. In small NanoGPT pretraining, Muon preserves stable rank, has a broad learning-rate plateau, and improves validation loss relative to AdamW; in a matched small-ViT control, the ranking reverses. The resulting picture is regime-dependent: Muon is not universally superior, but its flat-spectrum bias can help when many spectral directions need to remain active.
\end{abstract}
\section{Introduction}
\label{sec:intro}

A matrix gradient carries two kinds of information: which singular directions are active, and how strongly each direction appears. Gradient descent uses both. Muon~\citep{jordan2024muon} keeps the directions and discards the relative strengths. If
\[
    G=U\Sigma V^\top,
\]
then Muon updates with the polar factor
\[
    \polar{G}=UV^\top.
\]
Thus Muon is not merely rescaling the gradient. It changes the update spectrum: a gradient-descent step inherits the singular values of \(G\), while a Muon step gives every active singular direction the same amplitude.

This paper studies the bias induced by that flattening operation. Muon has become a practical optimizer for transformer training~\citep{liu2025moonlight}, but empirical acceleration alone does not explain what the polar map selects. Two natural explanations are tempting. One is that Muon is just matrix-gradient normalization; another is that spectral-norm geometry should lead to nuclear-norm, hence low-rank, implicit bias. We show that both are incomplete. Muon is better understood as a \emph{flat-update-spectrum} method: it keeps the gradient singular directions while allocating equal update amplitude across them.

The regression setting makes this question sharp. In separable classification, recent theory for Muon-like methods studies max-margin directions at infinity. In underdetermined regression, the zero-loss set is instead an affine interpolation manifold, and the optimizer must choose one finite interpolant among many. We ask which solution spectra are favored by the polar update.

Our main theoretical model is the projected polar flow on
\[
    \mathcal M=\{W:XW=Y\}.
\]
Under a shared-frame alignment assumption, the singular values obey
\[
    \dot\sigma_i=-\eta\,\alpha_i,
    \qquad
    \alpha_i\in[0,1],
\]
with \(\alpha_i\) determined by the measurement geometry. This yields an exact sign criterion for whether the normalized spectrum flattens or concentrates. The distinction matters: the pairwise potential \(R_{\mathrm{pw}}\) identifies the flat spectrum as the unique fixed-radius variational state, but it is not a Lyapunov function for the unnormalized flow. Muon contracts the norm; whether its spectral shape flattens is governed by the sign criterion.

\begin{wrapfigure}{r}{0.54\textwidth}
\vspace{-10pt}
\centering
\includegraphics[width=\linewidth]{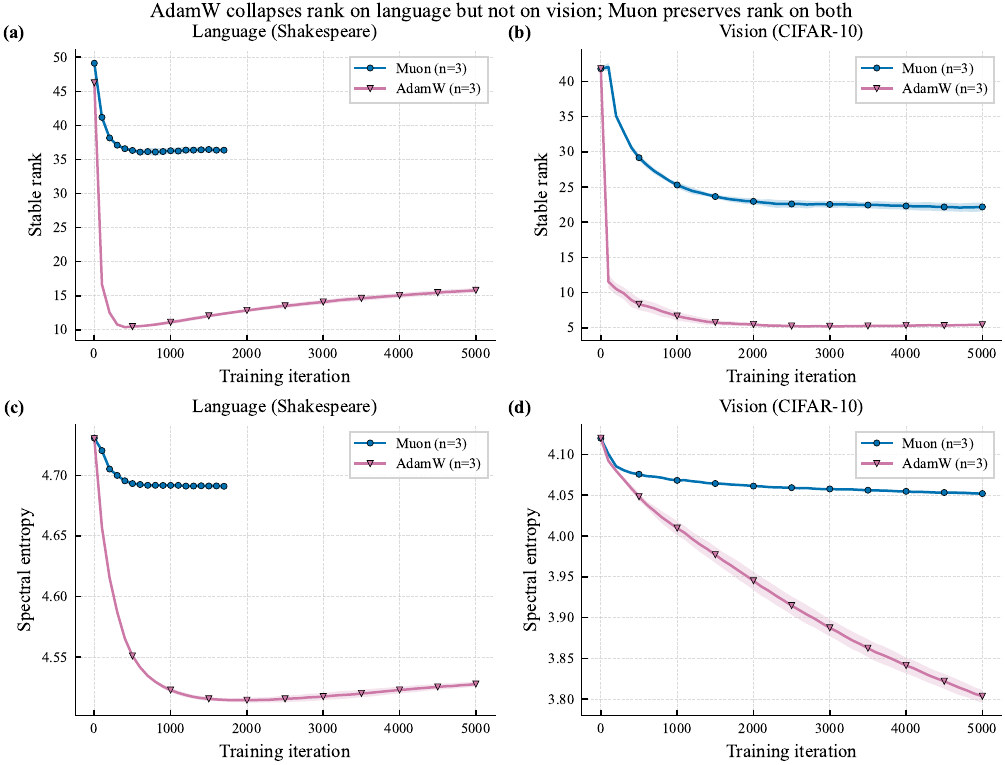}
\vspace{-8pt}
\caption{\textbf{Regime reversal.}
Muon preserves stable rank in small NanoGPT, while AdamW wins in a matched small-ViT/CIFAR-10 control. 
This is directional evidence for regime dependence, not a pure modality intervention.}
\label{fig:hero}
\vspace{-0.6cm}
\end{wrapfigure}

\paragraph{Contributions.}
We make four contributions.
\begin{itemize}[leftmargin=*,topsep=2pt,itemsep=2pt,parsep=0pt]
    \item \textbf{A one-step spectral-bias theorem.}
    Under explicit alignment assumptions, we prove that the polar profile maximizes first-order spectral-entropy gain among bounded updates that follow the gradient singular directions and ignore the current weight spectrum.

    \item \textbf{Trajectory-level singular-value dynamics.}
    We derive exact singular-value dynamics for the projected polar flow, including a measurement-geometry sign criterion for spectral flattening and a fixed-radius characterization of the flat spectrum.

    \item \textbf{Separation from normalization and nuclear-norm explanations.}
    In controlled matrix-sensing experiments, norm-matched gradient descent does not reproduce Muon's behavior, and Muon's limit points do not match convex nuclear-norm minimizers.

    \item \textbf{Transformer signatures and regime dependence.}
    In small NanoGPT pretraining, Muon preserves stable rank; in an architecture- and budget-matched small-ViT control, the optimizer ranking reverses. These experiments suggest that the flat-spectrum bias is useful in some spectral regimes but not universally.
\end{itemize}

\begin{summarybox}
{\color{mydarkblue}\textbf{Takeaway. }}The resulting practitioner-relevant message is regime-dependent, not universal. Flat update spectra can help when many spectral directions must remain active; they can be unnecessary or worse when the useful spectrum is already low-dimensional or well conditioned. 
\end{summarybox}

\textit{In small NanoGPT pretraining, Muon preserves stable rank and improves validation loss relative to AdamW. In a matched small-ViT control, the ranking reverses.}
This supports a regime story rather than a universal optimizer claim.

\newpage
\begin{summarybox}
\section{How AI Wrote this Paper}
\label{sec:pai}
This section is entirely written by humans and this is the only such section of this paper.

\subsection{We seek feedback}
\label{sec:we_seek_feedback}

AI-assisted research can make a project appear more settled than it is. 
Messeri and Crockett, \citep{messeri2024artificial}, describe a broad epistemic risk of AI in science: researchers may develop \emph{illusions of understanding}, believing they understand more, have explored more, or have achieved more objectivity than is warranted. 
At the artifact level, Beneventano et al., \citep{beneventano2026agent}, call the analogous failure mode \emph{closure failure}: a manuscript may linguistically resolve uncertainty before the underlying claims have been adequately verified.

We tried to mitigate this risk by treating AI-assisted drafts as provisional. 
The authors manually inspected theorem statements, proof dependencies, numerical summaries, experimental logs, and novelty claims. 
The agentic system was asked to weaken or remove the claims supported only by conditional theory, small-scale experiments, or incomplete evidence.

We are actively seeking feedback from researchers working on Muon, matrix optimization, implicit bias, optimizer theory, and language-model pretraining. 
We especially welcome scrutiny of the proof assumptions, experimental interpretation, novelty positioning, and whether any abstract-level claims outrun their available verifier.

\subsection{How we produced it}
We are aware that human researchers already use AI heavily to produce papers. However, they typically exchange hundreds or thousands of prompts and end up mostly writing the text themselves. This is what we tried to avoid. This paper was the testbed for us to develop pAI/MSc \citep{abdelmoneum2026paimsc}, we iterated on the agentic system towards higher "quality" and "taste" benchmarking it on how it was writing this paper.

The initial prompt for this paper was the following:
\begin{quote}
    We propose a systematic investigation into the theoretical and empirical properties of the Muon optimizer, focusing on two foundational questions. First, we ask whether Muon induces an implicit regularization effect analogous to known results for SGD and Adam — specifically, whether training under Muon on a given loss landscape is equivalent to gradient descent on a regularized objective (i.e., whether there exist loss-algorithm pairs ($L_1$, Muon) $\approx$ ($L_2$, GD) for some implicitly regularized loss $L_2$). Establishing such a correspondence would provide principled insight into Muon’s inductive biases and generalization behavior, and constitutes a standalone theoretical contribution. Second, we investigate whether Muon admits a scaling rule relating batch size to step size — that is, whether there exists a critical batch size regime and a precise functional relationship governing how step size must adjust with batch size to preserve per-step loss reduction. Such scaling laws are well-characterized for SGD but remain unexplored for Muon, and understanding them is essential for efficient large-scale training. \\

    To ground these theoretical inquiries empirically, we conduct a controlled comparison of Muon against Adam(W) on language modeling using the NanoGPT framework. A key methodological priority is ensuring fairness in this comparison; we will perform a thorough literature review of best practices for optimizer benchmarking — including matched compute budgets, hyperparameter tuning protocols (e.g., grid search vs. population-based training), and normalization of effective learning rates — to avoid conflating optimizer-intrinsic effects with tuning artifacts. Together, these three threads — implicit regularization theory, scaling law analysis, and rigorous empirical validation — aim to provide a comprehensive characterization of Muon’s optimization dynamics, bridging the gap between its emerging practical adoption and formal theoretical understanding.
\end{quote}

Precisely, the pipeline was to give this prompt, then review the outcome as a human, give the review to the agentic system again and checking its iteration 2, and so on.

The paper you are about to read is based on what came up at the end of the 5th iteration. Precisely, 5 main iterations of pAI/MSc, plus one of our pAI/MSc claude skill to polish, clean it, make it "submittable" to Arxiv, and making it more readable to humans. The details of this pipeline can be found in \citep{abdelmoneum2026paimsc}.

The process started in \textbf{\textit{February 2026}}, the day we obtain this version was \textbf{\textit{April 9th, 2026}}. 

\end{summarybox}

\newpage

\section{Related Work}
\label{sec:related}

\paragraph{Implicit bias in matrix sensing.}
The implicit bias of gradient descent in overparameterized regression is well understood: GD converges to the minimum Euclidean-norm solution in linear models \citep{soudry2017bias,gunasekar2017implicit} and to the minimum nuclear-norm solution in matrix sensing under near-origin initialization \citep{gunasekar2017implicit,wu2021implicit}. The standard mirror-descent guarantee for preconditioned dynamics \citep{gunasekar2018implicitly} requires the regularizer to admit a Legendre potential; the polar map does not arise from a Bregman projection, so this route is unavailable, and Theorem~\ref{thm:thmA} proceeds via a direct Riemannian analysis instead.

\paragraph{Muon and spectral optimization.}
Muon \citep{jordan2024muon} computes the polar factor of the gradient via Newton--Schulz iteration \citep{bernstein2024spectral,tuddenham2022orthogonalising}. 
On the optimization side, Muon has been interpreted as a non-Euclidean steepest-descent or trust-region method under spectral-norm geometry \citep{lihong2025muon,kovalev2025trustregion}, as an instance of a constrained spectral-norm implicit-regularization framework \citep{chen2025spectralconstraint}. \citep{higham2008functions,amsel2025polar,lau2025polargrad,davis2025spectral} characterize the polar/PolarGrad family. \citep{shen2025convergence} prove convergence rates under standard nonconvex assumptions. \citep{liu2025moonlight} demonstrate Muon's empirical advantages in LLM pretraining. Empirical accounts of spectral dynamics during training \citep{yunis2024spectral,olsen2025spectra} are consistent with the flat-spectrum prediction of Theorem~\ref{thm:thmA}(iii)--(iv) under the sign-criterion regime.

\paragraph{Concurrent classification work.}
\citep{fan2025muon,gronich2026muon} prove max-margin convergence of Muon under cross-entropy; \citep{kang2026muon} report uniform spectral growth in LoRA-style factorization; \citep{vasudeva2025imbalanced} prove equal-rate principal-component learning in bilinear classification. These results are structurally complementary: classification geometry is loss-divergent and margin-controlled, while ours is loss-vanishing and manifold-controlled. A scope table comparing this paper to the concurrent work appears in Appendix~\ref{app:scope_comparison} (Table~\ref{tab:scope}).

\paragraph{Convergence theory and practical Muon variants.}
Several works establish convergence guarantees for Muon or Muon-like methods under smooth nonconvex assumptions, with variants incorporating Nesterov momentum, weight decay, adaptive scaling, trust-ratio normalization, or schedule-free averaging \citep{shen2025convergence,sato2025cbs,si2025adamuon,lou2026orscale,kim2026amuse,cheng2026trasmuon}. These results reinforce a useful separation of regimes: Muon's empirical success appears to depend on smooth, matrix-structured geometry rather than on generic nonsmooth convex theory. Our theorems are deliberately local and structural: they assume a fixed-rank stratum, projected interpolation dynamics, and explicit spectral-gap or polar-misalignment controls.

\paragraph{When spectral updates help in deep networks (by ChatGPT 06/06/26).}
Empirical and theoretical studies of neural-network spectra suggest that training often has a persistent low-rank-plus-bulk structure in the singular values of weights and activations \citep{yunis2024spectral,olsen2025spectra,lauditi2026spectral}. \citep{davis2025spectral} give a layerwise condition predicting when a spectral update should produce a larger one-step decrease than a Euclidean update, comparing the gradient's nuclear-to-Frobenius ratio with the stable rank of incoming activations. Large-scale empirical reports further show that Muon can improve pretraining efficiency and large-batch behavior in language models \citep{essentialai2025practical,liu2025moonlight}. Our transformer experiments should be read in this spirit: they are spectral signatures consistent with the theory, not a proof that transformer training literally reduces to the affine matrix-regression model studied in Theorem~\ref{thm:thmA}.


\paragraph{Critical batch size and polar-map noise geometry.}
The critical-batch-size literature studies the signal-to-noise crossover at which increasing the batch size yields diminishing returns \citep{mccandlish2018empirical,smith2018dont}. Recent LLM work revisits this crossover empirically and through scaling laws \citep{zhang2024cbs,merrill2025cbs}. For Muon specifically, \citep{sato2025cbs} derive a critical batch size from a nonconvex convergence-rate analysis. Our Theorem~\ref{thm:thmB} gives a different, one-step linearized derivation: in the square full-rank regime, the stochastic sensitivity of the polar map is controlled by
\[
        \operatorname{tr}\!\left(DP[\Sigma]DP^\top\right),
        \qquad
        S(\mu)=\sum_{i\neq j}(\sigma_i+\sigma_j)^{-2}.
\]
Thus the batch-size crossover is tied directly to polar-map curvature. 

\paragraph{Summary of distinction.}
The closest overlapping papers explain why flat update spectra can improve stability, convergence, or component-wise learning rates. This paper addresses a different selection problem. In underdetermined regression, after the loss has vanished, the polar flow must choose one finite interpolant from an affine manifold. Our contribution is to characterize that choice through projected polar dynamics, a measurement-dependent flattening criterion, a flat-spectrum variational state, and a polar-map noise-sensitivity formula. This separates Muon's implicit bias from both norm-matched gradient normalization and nuclear-norm minimization.


\paragraph{April--May 2026 Muon papers (by ChatGPT 06/06/26).}
Several very recent preprints appeared between when this research effort was concluded and now,
sharpening the surrounding picture of Muon and polarized matrix updates. Newton--Muon derives
a layerwise quadratic-surrogate view and interprets standard Muon as neglecting an input-covariance
right preconditioner \citep{du2026newtonmuon}; Muon$^2$ preconditions the momentum matrix by
Adam-style second moments before orthogonalization to improve Newton--Schulz conditioning
\citep{liu2026muon2}. Closest in language to our work, recent spectral-flattening theory argues
that Muon's larger stable learning rates and improved convergence can be explained under a
Kronecker-factored curvature model \citep{nguyen2026spectralflattening}; this is a training-stability
and descent-rate theory, whereas Theorem~\ref{thm:thmA} studies finite-interpolant selection on the
zero-loss affine manifold. Complementarily, random-spectrum and inverted-spectrum variants have
been reported to perform comparably to Muon, emphasizing local alignment and descent potential
rather than a unique global norm geometry \citep{shumaylov2026muonnotthat}. Negative and
regime-dependent results further delimit the scope of Muon theory: Muon need not converge on
general convex Lipschitz objectives without error feedback \citep{parshakova2026muonconvex};
uniform spectral whitening can fail in VLA and RLVR settings, motivating high-pass Newton--Schulz
filters \citep{fan2026pion}; and optimizer--recipe and robustness-dependent spectral effects have
been reported in ViTs and adversarial training \citep{southworth2026muonvit,yan2026adversarialmuon}.
Finally, DP-Muon analyzes clipping, Gaussian privacy noise, momentum, and Newton--Schulz
orthogonalization \citep{kim2026dpmuon}; this is adjacent to our noise analysis, but studies
privacy-induced heat-smoothing bias rather than the Frechet polar sensitivity $S(\mu)$ and the
critical-batch-size crossover of Theorem~\ref{thm:thmB}.
\section{Setup}
\label{sec:setup}

\paragraph{Matrix regression and the polar map.}
Let $W\in\mathbb{R}^{m\times n}$ be a weight matrix and $X\in\mathbb{R}^{p\times m}$ a measurement matrix with $\operatorname{rank}(X)=p<m$ (so the system is underdetermined). The matrix-regression loss is $L(W)=\tfrac{1}{2}\|XW-Y\|_F^2$. Write $G=\nabla_W L(W)=X^\top(XW-Y)$ with SVD $G=U_G\Sigma_G V_G^\top$; the polar map is $P(G):=U_G V_G^\top$. The continuous-time Muon flow with step size $\eta>0$ is
\begin{equation}
\dot W \;=\; -\eta\,P(G).
\label{eq:muon_continuous}
\end{equation}

\paragraph{The interpolation manifold.}
Let $\mathcal{M}=\{W\in\mathbb{R}^{m\times n}:XW=Y\}$ denote the affine manifold of zero-loss interpolants. Its tangent space is constant: $T_W\mathcal{M}=\ker X$ for all $W\in\mathcal{M}$. The orthogonal projector onto $T_W\mathcal{M}$ in the Frobenius inner product is
\[
P_\perp \;:=\; I_m - X^\dagger X, \qquad X^\dagger := X^\top(XX^\top)^{-1}.
\]
$P_\perp$ is symmetric, idempotent, with $\|P_\perp\|_{\mathrm{op}}=1$.

\paragraph{Shared-frame condition.}
We invoke the following structural condition.

\begin{definition}[Shared Frames, abbreviated \textsf{SF}]
\label{def:sf}
The gradient and weight share singular frames: $U_G=U_W$ and $V_G=V_W$, so $G=U_W\Sigma_G V_W^\top$. Under \textsf{SF}, $P(G)=U_W V_W^\top = P(W)$, and the polar update acts diagonally on the singular values.
\end{definition}

\textsf{SF} is a structural condition; it is not assumed to hold globally in time. Section~\ref{sec:experiments} reports $\|\sin\Theta\|_F<0.1$ (Frobenius distance between gradient and weight singular frames) after an initial transient across all tested configurations. The Robustness Theorem (Theorem~\ref{thm:robust}) below quantifies how the conclusions of Theorem~\ref{thm:thmA} degrade when \textsf{SF} is only approximately satisfied.

\paragraph{Projected polar flow vs.\ literal Muon at zero loss.}
On the interpolation manifold $\mathcal M$, the gradient vanishes ($G=0$), so the literal continuous flow~\eqref{eq:muon_continuous} stops; in practice, however, the discrete Muon iterate carries a momentum buffer $m_t$ with $P(m_t)\ne 0$ and remains spectrally active. The object analyzed in Theorem~\ref{thm:thmA} is therefore not literal unconstrained Muon at zero loss, but the projected polar flow obtained by Euler-projecting the polar step back onto $\mathcal M$.

\begin{proposition}[Projected polar flow]
\label{prop:proj_flow}
Fix $W\in\mathcal M$. The orthogonal projection of one Euler step onto $\mathcal M$ is
\begin{equation}
W^+ \;=\; \Pi_{\mathcal M}\!\bigl(W-\eta\,P(W)\bigr) \;=\; W - \eta\,P_\perp\,P(W),
\label{eq:proj_step}
\end{equation}
where $\Pi_{\mathcal M}$ denotes Euclidean projection onto $\mathcal M$. The continuous-time flow is
\begin{equation}
\dot W \;=\; -\eta\,P_\perp\,P(W).
\label{eq:proj_flow}
\end{equation}
\end{proposition}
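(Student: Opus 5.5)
The plan is to compute the Euclidean (Frobenius) projection onto the affine manifold $\mathcal M$ explicitly, and then apply it to the Euler step $W-\eta P(W)$. We use the constant tangent space $T_W\mathcal M=\ker X$ from the setup, and the projector $P_\perp=I_m-X^\dagger X$. There is an interpretive point. At $W\in\mathcal M$ the gradient vanishes, so the step being projected is the polar step with $P(W)$ in place of $P(G)$. Equivalently, it is the \textsf{SF} identification $P(G)=P(W)$ of Definition~\ref{def:sf} carried to the manifold. I take the step as written in \eqref{eq:proj_step} and treat $P(W)$ as a given matrix $D\in\mathbb R^{m\times n}$.

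\textbf{Step 1 (projection formula).} For any $Z\in\mathbb R^{m\times n}$, I claim
\[
\Pi_{\mathcal M}(Z)=Z-X^\dagger(XZ-Y).
\]
Membership in $\mathcal M$ holds because $XX^\dagger=I_p$, which uses $\operatorname{rank}X=p$ so that $XX^\top$ is invertible. Optimality holds because the residual $X^\dagger(XZ-Y)=X^\top(XX^\top)^{-1}(XZ-Y)$ has columns in $\operatorname{range}X^\top=(\ker X)^\perp$. It is therefore Frobenius-orthogonal to every direction $\Delta$ with $X\Delta=0$. By the Pythagorean identity, $\|Z-W'\|_F^2=\|Z-\Pi(Z)\|_F^2+\|\Pi(Z)-W'\|_F^2$ for all $W'\in\mathcal M$. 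Hence the formula gives the unique minimizer. This works column by column, since the Frobenius norm decouples over the $n$ columns and $X$ acts on each column separately.

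\textbf{Step 2 (apply to the Euler step).} Set $Z=W-\eta D$ with $W\in\mathcal M$, so $XW=Y$. Then $XZ-Y=-\eta XD$, and
\[
\Pi_{\mathcal M}(Z)=W-\eta D+\eta X^\dagger XD=W-\eta(I_m-X^\dagger X)D=W-\eta P_\perp P(W),
\]
which is \eqref{eq:proj_step}. One should also check that $P_\perp$ is symmetric and idempotent with range $\ker X$. Symmetry and idempotence follow from $X^\dagger X=X^\top(XX^\top)^{-1}X$ being the orthogonal projector onto $\operatorname{range}X^\top$. This confirms that $P_\perp$ is the Frobenius-orthogonal projector onto $T_W\mathcal M$ when it acts by left multiplication.

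\textbf{Step 3 (continuous limit).} Dividing $W^+-W$ by the step and letting it tend to zero gives \eqref{eq:proj_flow}. In other words, the flow is defined as the limit of the projected Euler scheme, with the velocity field $-\eta P_\perp P(W)$. Two facts make this consistent. First, $XP_\perp=0$, so the field is tangent to $\mathcal M$ and solutions starting on $\mathcal M$ stay on it. Second, the projected scheme agrees exactly with the forward Euler scheme for this field, so the two define the same flow.

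\textbf{Main obstacle.} The only real subtlety is Step 3. $P(W)$ is smooth only on a fixed-rank stratum, and it is locally Lipschitz there only with constants depending on the smallest nonzero singular value. So existence and uniqueness of the flow, and convergence of the Euler scheme to it, should be stated locally on that stratum. I would simply note this restriction, consistent with the fixed-rank assumptions mentioned in Section~\ref{sec:related}, rather than prove global well-posedness.
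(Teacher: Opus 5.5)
Your proposal is correct and follows essentially the same route as the paper. The paper asserts directly that, because $\mathcal M$ is affine with constant tangent space $\ker X$, one has $\Pi_{\mathcal M}(W+V)=W+P_\perp V$; you establish this explicitly through the closed form $\Pi_{\mathcal M}(Z)=Z-X^\dagger(XZ-Y)$, and both arguments then read off the flow as the vector field of the projected Euler scheme with unit time step. Your justification of $W^+\in\mathcal M$ through $XP_\perp=0$ is the precise reason, whereas the paper attributes it to idempotence of $P_\perp$. Your caveat that well-posedness holds only locally on the fixed-rank stratum is a sensible addition that the paper does not make.
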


\begin{proof}
$\mathcal M$ is affine with constant tangent space $T_W\mathcal M=\ker X$, so $\Pi_{\mathcal M}(W+V)=W+P_\perp V$ for any $W\in\mathcal M$ and $V\in\mathbb R^{m\times n}$. Setting $V=-\eta P(W)$ gives \eqref{eq:proj_step}. Idempotence of $P_\perp$ ensures $W^+\in\mathcal M$. Equation~\eqref{eq:proj_flow} is the corresponding continuous-time vector field obtained by taking the per-step ODE limit $W^+ \approx W + \dot W\,\Delta t$ with $\Delta t = 1$.
\end{proof}

Theorem~\ref{thm:thmA} below characterizes the projected polar flow~\eqref{eq:proj_flow}. We use ``projected polar flow'' and ``projected self-polar flow'' interchangeably (both refer to~\eqref{eq:proj_flow}); ``gradient-driven flow'' refers to $\dot W=-\eta P_\perp P(G)$, the object of Theorem~\ref{thm:robust}.

\paragraph{The pairwise spectral functional.}
Let $W=U_W\Sigma_W V_W^\top$ be the SVD of $W$ with ordered singular values $\sigma_1\geq\cdots\geq\sigma_r>0$ and $r=\operatorname{rank}(W)$. We use two functionals:
\begin{equation}
R_{\mathrm{pw}}(W) \;:=\; -\sum_{i\neq j}\log(\sigma_i+\sigma_j),
\qquad
\widetilde R_{\mathrm{pw}}(W) \;:=\; -\sum_{i\neq j}\log(\widetilde\sigma_i+\widetilde\sigma_j),
\label{eq:Rpw}
\end{equation}
where $\widetilde\sigma_i:=\sigma_i/\rho$ and $\rho:=\|W\|_F$. The functional $\widetilde R_{\mathrm{pw}}$ is scale-invariant and tracks spectral \emph{shape}; flat spectra ($\widetilde\sigma_i=1/\sqrt r$) minimize $\widetilde R_{\mathrm{pw}}$ subject to $\sum_i\widetilde\sigma_i^2=1$.

\paragraph{Polar-map spectral sensitivity.}
\begin{definition}[Polar-Map Spectral Sensitivity]
\label{def:smu}
$S(\mu):=\sum_{i\neq j}(\sigma_i+\sigma_j)^{-2}$, evaluated at the gradient singular values $\{\sigma_i\}$. $S(\mu)$ measures how strongly the polar map amplifies noise: small or near-degenerate singular values inflate $S(\mu)$ via the $(\sigma_i+\sigma_j)^{-2}$ terms.
\end{definition}

For $r$ equal singular values $\sigma_0$, $S(\mu)=r(r-1)/(4\sigma_0^2)$; Appendix~\ref{sec:methods:smu_validation} verifies this against $\operatorname{tr}(D_P D_P^\top)/2$.

\begin{table}[h]
\centering\small
\caption{The three flows referenced throughout the paper. Theorem~\ref{thm:robust} bridges the projected gradient/momentum polar flow to the projected self-polar flow via the gauge-invariant primitive $\delta_P=\|P(G)-P(W)\|_F$; under \textsf{SF}, $P(G)=P(W)$ and the two flows coincide. We use exactly these three names everywhere; Theorem~\ref{thm:thmA} requires no Shared-Frame assumption.}
\label{tab:three_flows_main}
\begin{tabular}{@{}lll@{}}
\toprule
Flow & Update rule & Used in \\
\midrule
Literal Muon (discrete) & $W_{t+1} = W_t - \eta\,P(m_t) + \beta(W_t-W_{t-1})$ & Practice; Theorem~\ref{thm:thmB} \\
Projected self-polar flow & $\dot W = -\eta\,P_\perp\,P(W)$ & Theorems~\ref{thm:thmA}, \ref{thm:discrete} \\
Projected gradient/momentum polar flow & $\dot W = -\eta\,P_\perp\,P(G)$ & Theorem~\ref{thm:robust} \\
\bottomrule
\end{tabular}
\end{table}

\paragraph{Experimental regimes.}\label{sec:methods}%
We test the dynamics of Theorems~\ref{thm:thmA}--\ref{thm:thmB} across four regimes (full hyperparameters in Appendix~\ref{app:experiments}). \emph{Matrix sensing}: random Gaussian operators $\mathcal A\in\mathbb R^{p\times d^2}$ with rank-$r$ targets, three instance scales (10-seed family $p=n=6$, $d=10$ for gap statistics; large instance $p=50$, $d=20$ for trajectories; $d=50$ stress test), comparing Muon to CVXPY/MOSEK nuclear-norm and $R_{\mathrm{pw}}$ minimizers; general $\mathcal A$ is the affine setting of Remark~\ref{rem:affine}, so these are sanity checks of the dynamics rather than direct theorem validation. \emph{Shared-frame misalignment sweep}: rank-1 pairs with controlled principal-angle misalignment $\theta\in\{0,5,10,15,22,30,45,60,75,90\}^\circ$, measuring $\epsilon(\theta)=\|P(G)_{\mathrm{exact}}-P(G)_{\mathsf{SF}}\|_F$ against Theorem~\ref{thm:robust}'s bound. \emph{Weight-decay phase diagram}: a 2D grid over block ranks $K\in\{2,4,8,16,32\}$ and decay $\lambda\in\{0,10^{-3},10^{-2},0.1,1\}$, coupled vs.\ decoupled, measured by the Active Threshold Spectral Rank (ATSR) integral. \emph{Transformer spectral profiling}: NanoGPT (124M) on OpenWebText for $5{,}000$ steps; Muon ($\beta=0.95$, decoupled $\lambda=10^{-2}$, $\eta_{\max}=6\!\times\!10^{-4}$) vs.\ matched AdamW \citep{loshchilov2017adamw}, reporting per-layer spectral entropy and nuclear norm; activation rank of $H_\ell$ is not measured (Section~\ref{sec:transformer}).

\section{Theoretical Results}
\label{sec:theory}

We characterize the three flows of Table~\ref{tab:three_flows_main} in four theorems. Theorem~\ref{thm:thmA} gives exact spectral dynamics for the projected self-polar flow $\dot W=-\eta\,P_\perp\,P(W)$ \emph{without} invoking any Shared-Frame assumption. Theorem~\ref{thm:robust} bridges to the projected gradient/momentum polar flow $\dot W=-\eta\,P_\perp\,P(G)$ via the gauge-invariant primitive $\delta_P=\|P(G)-P(W)\|_F$; under \textsf{SF} (Definition~\ref{def:sf}) the bridge is exact. Theorem~\ref{thm:discrete} extends the projected self-polar analysis to discrete steps with $O(\varepsilon^2)$ corrections in the dimensionless step size $\varepsilon=\eta/\Delta$ under a simple-singular-value gap. Theorem~\ref{thm:thmB} derives a polar-map noise-sensitivity formula and the resulting critical batch size for literal Muon (square full-rank gradient) via a one-step linearized signal/noise crossover.

\subsection{Spectral Dynamics on the Interpolation Manifold}
\label{sec:theory:thmA}

\noindent\textit{[Flow: projected self-polar flow $\dot W = -\eta\,P_\perp\,P(W)$. No Shared-Frame assumption is used in this theorem.]}

\begin{theorem}[Spectral Dynamics of the Projected Polar Flow]
\label{thm:thmA}
Let $\mathcal{M}=\{W\in\mathbb{R}^{m\times n}:XW=Y\}$, $P_\perp=I_m-X^\dagger X$, and consider the projected self-polar flow $\dot W = -\eta\,P_\perp\,P(W)$ from Proposition~\ref{prop:proj_flow}, where by definition $P(W)=U_W V_W^\top$. Suppose on an interval~$I$: (a) $W(t)\in\mathcal{M}$, (b) $\operatorname{rank}(W(t))=r\geq 3$ is constant, and (c) the singular values are simple, $\sigma_1(W(t))>\sigma_2(W(t))>\cdots>\sigma_r(W(t))>0$, so they are smooth functions of $W$ on $I$. Then:
\begin{enumerate}[label=\textup{(\roman*)}]
\item \textbf{Singular-value dynamics.}\;
  $\dot{\sigma}_i = -\eta\,\alpha_i$, where $\alpha_i := [U_W^\top P_\perp U_W]_{ii} = \|P_\perp u_i\|^2 \in[0,1]$.

\item \textbf{Frobenius contraction.}\;
  $\displaystyle\frac{d}{dt}\|W\|_F^2 = -2\eta\sum_{i=1}^{r}\alpha_i\,\sigma_i \leq 0.$

\item \textbf{Spectral flattening criterion.}\;
  Let $\widetilde{\sigma}_i := \sigma_i/\rho$ with $\rho:=\|W\|_F$, $S_i := \sum_{j\neq i}(\widetilde{\sigma}_i+\widetilde{\sigma}_j)^{-1}$, $C := \binom{r}{2}$, and $q_i := S_i - C\,\widetilde{\sigma}_i$. Then
  \begin{equation}\label{eq:sign_criterion}
    \frac{d\widetilde{R}_{\mathrm{pw}}}{dt} \;=\; \frac{2\eta}{\rho}\sum_{i=1}^{r}\alpha_i\,q_i.
  \end{equation}
  Spectral flattening ($d\widetilde{R}_{\mathrm{pw}}/dt \leq 0$) holds if and only if $\sum_i \alpha_i q_i \leq 0$.

\item \textbf{Variational characterization.}\;
  For any $c>0$, every rank-$r$ matrix $W$ with $\|W\|_F=c$ that minimizes $R_{\mathrm{pw}}$ has the flat singular-value spectrum $\sigma_1=\cdots=\sigma_r=c/\!\sqrt r$. (The minimizing \emph{spectrum} is unique up to permutation; the minimizing \emph{matrix} $W$ is unique only up to the $U,V$ singular-frame gauge.)
\end{enumerate}
\end{theorem}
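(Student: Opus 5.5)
The plan is to treat (i) as the basic computation and to derive (ii) and (iii) from it by the chain rule. Part (iv) is a separate static inequality argument that does not depend on the flow.

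For (i), I would use the first-order perturbation formula for simple singular values. Hypothesis (c) makes $\sigma_i$, $u_i$ and $v_i$ differentiable on $I$, which gives $\dot\sigma_i=u_i^\top \dot W v_i$. This follows by differentiating $\sigma_i=u_i^\top W v_i$ and using $u_i^\top\dot u_i=0$, $v_i^\top\dot v_i=0$, $W v_i=\sigma_i u_i$ and $W^\top u_i=\sigma_i v_i$.

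Substituting $\dot W=-\eta P_\perp U_WV_W^\top$ and noting $U_WV_W^\top v_i=u_i$, I get
\[
\dot\sigma_i=-\eta\, u_i^\top P_\perp u_i=-\eta\|P_\perp u_i\|^2,
\]
where the last equality uses that $P_\perp$ is symmetric and idempotent. Since $\|P_\perp\|_{\mathrm{op}}=1$ and $\|u_i\|=1$, we have $\alpha_i\in[0,1]$. Hypothesis (a) is only needed so that the flow stays on $\mathcal M$, which Proposition~\ref{prop:proj_flow} already guarantees. This is the step I expect to carry the technical weight: I need the smoothness of the SVD factors under a simple spectrum with constant rank, including the rectangular case where $u_i,v_i$ live in different spaces. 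I would cite the standard analytic-perturbation result rather than reprove it.

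Part (ii) is then immediate from $\|W\|_F^2=\sum_i\sigma_i^2$, so that $\frac{d}{dt}\|W\|_F^2=2\sum_i\sigma_i\dot\sigma_i$. Each term $\alpha_i\sigma_i$ is nonnegative, which gives the sign.

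For (iii), I first compute
\[
\dot\rho=\frac{1}{\rho}\sum_j\sigma_j\dot\sigma_j=-\frac{\eta}{\rho}\sum_j\alpha_j\sigma_j,
\]
and from it
\[
\dot{\widetilde\sigma}_i=-\frac{\eta}{\rho}\Bigl(\alpha_i-\widetilde\sigma_i\sum_j\alpha_j\widetilde\sigma_j\Bigr).
\]
Differentiating $\widetilde R_{\mathrm{pw}}$ and symmetrizing the double sum gives $\frac{d}{dt}\widetilde R_{\mathrm{pw}}=-2\sum_i\dot{\widetilde\sigma}_iS_i$. After substituting, the cross term contains the factor $\sum_i\widetilde\sigma_iS_i$. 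This factor equals $\sum_{i\ne j}\widetilde\sigma_i/(\widetilde\sigma_i+\widetilde\sigma_j)$, and pairing the $(i,j)$ and $(j,i)$ terms shows it equals $\tfrac12 r(r-1)=C$. Regrouping by $\alpha_j$ then yields
\[
\frac{2\eta}{\rho}\sum_j\alpha_j\bigl(S_j-C\widetilde\sigma_j\bigr),
\]
which is exactly \eqref{eq:sign_criterion}. The ``iff'' statement follows because $2\eta/\rho>0$.

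For (iv), concavity of $\log$ (Jensen over the $r(r-1)$ ordered pairs) gives
\[
\sum_{i\neq j}\log(\sigma_i+\sigma_j)\le r(r-1)\log\!\Bigl(\tfrac{2}{r}\sum_i\sigma_i\Bigr).
\]
Cauchy--Schwarz gives $\sum_i\sigma_i\le\sqrt r\,c$, so $R_{\mathrm{pw}}(W)\ge -r(r-1)\log(2c/\sqrt r)$. The flat spectrum $\sigma_i=c/\sqrt r$ attains this bound, so it is a minimizer. Conversely, any minimizer must achieve equality in Cauchy--Schwarz, which already forces all $\sigma_i$ to be equal. Jensen's equality condition is consistent with this: all pairwise sums are then equal, and for $r\ge3$ the relation $\sigma_i+\sigma_j=\sigma_i+\sigma_k$ independently gives $\sigma_j=\sigma_k$. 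Hence the minimizing spectrum is unique up to permutation. The matrix $W$ itself is determined only up to the choice of singular frames $U,V$, because $R_{\mathrm{pw}}$ depends on the singular values alone.
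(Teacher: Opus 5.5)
Your parts (i)--(iii) follow the paper's proof essentially step for step: the first-order formula $\dot\sigma_i=u_i^\top\dot W v_i$, the normalized dynamics for $\dot{\widetilde\sigma}_i$, the chain rule with $\partial\widetilde R_{\mathrm{pw}}/\partial\widetilde\sigma_i=-2S_i$, and the pairing identity $\sum_i\widetilde\sigma_iS_i=C$. You give slightly more justification for the perturbation formula than the paper does.

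Part (iv) is where you take a genuinely different route. The paper first proves a strict-convexity lemma for $\Phi=\tfrac12R_{\mathrm{pw}}$ by identifying its Hessian with the signless Laplacian of $K_r$ with weights $(\sigma_i+\sigma_j)^{-2}$; this Hessian is positive definite only for $r\ge3$. It then averages a putative minimizer $\sigma^*$ over coordinate permutations and applies Jensen to the convex functional. Finally it pushes the averaged point back onto the sphere by a rescaling of factor $\ge1$, using that $R_{\mathrm{pw}}$ decreases along rays. The $r=2$ case needs a separate remark.

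You instead apply Jensen to the concave $\log$ across the $r(r-1)$ pairwise sums, which reduces $R_{\mathrm{pw}}$ to a function of $\sum_i\sigma_i$, and then finish with Cauchy--Schwarz. Your argument is shorter and needs no Hessian computation. It works uniformly for all $r\ge2$. It also certifies that the flat point actually attains the minimum, with explicit value $-r(r-1)\log(2c/\sqrt r)$. Uniqueness can then be read off directly from the Cauchy--Schwarz equality case.

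The paper's route does yield strict convexity of $R_{\mathrm{pw}}$ on $\mathbb R^r_{>0}$, a structural fact of independent interest. For this statement, though, it is not needed: in the paper's own argument the strict inequality already comes from the rescaling step, so plain convexity would suffice.
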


\begin{lemma}[$q_i$ Monotonicity]
\label{lem:q_monotone}
For any $\widetilde\sigma\in\mathbb R^r_{>0}$ with $\|\widetilde\sigma\|_2=1$ and $r\geq 2$, the quantity $q_i=S_i-C\widetilde\sigma_i$ is strictly decreasing in $\widetilde\sigma_i$. Equivalently, if $\widetilde\sigma_1\geq\cdots\geq\widetilde\sigma_r$ then $q_1\leq\cdots\leq q_r$, with strict inequalities when $\widetilde\sigma_i>\widetilde\sigma_k$.
\end{lemma}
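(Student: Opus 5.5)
The proof is a direct comparison, and I would handle both readings of the statement: the pairwise version (``$\widetilde\sigma_i>\widetilde\sigma_k\Rightarrow q_i<q_k$''), and the literal statement that $q_i$ decreases as a function of its own argument $\widetilde\sigma_i$. The unit-norm constraint $\|\widetilde\sigma\|_2=1$ plays no role beyond guaranteeing positivity. The only fact about $C=\binom{r}{2}$ that is used is $C>0$ for $r\geq 2$.

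\emph{Step 1 (literal monotonicity).} Regard $q_i$ as a function of $\widetilde\sigma_i$ with the other coordinates fixed. Differentiating gives
\[
\frac{\partial q_i}{\partial\widetilde\sigma_i}=-\sum_{j\neq i}(\widetilde\sigma_i+\widetilde\sigma_j)^{-2}-C<0 .
\]
Both terms are negative since all $\widetilde\sigma_j>0$ and $C>0$, so $q_i$ is strictly decreasing in $\widetilde\sigma_i$.

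\emph{Step 2 (pairwise ordering).} Fix indices $i\neq k$ and write
\[
q_i-q_k=(S_i-S_k)-C(\widetilde\sigma_i-\widetilde\sigma_k).
\]
The key bookkeeping is in $S_i-S_k$. The term $j=k$ in $S_i$ and the term $j=i$ in $S_k$ are both $(\widetilde\sigma_i+\widetilde\sigma_k)^{-1}$, so they cancel. For each remaining index $j\notin\{i,k\}$ we have
\[
\frac{1}{\widetilde\sigma_i+\widetilde\sigma_j}-\frac{1}{\widetilde\sigma_k+\widetilde\sigma_j}=-\frac{\widetilde\sigma_i-\widetilde\sigma_k}{(\widetilde\sigma_i+\widetilde\sigma_j)(\widetilde\sigma_k+\widetilde\sigma_j)} .
\]
This yields
\[
q_i-q_k=-(\widetilde\sigma_i-\widetilde\sigma_k)\Bigl[\,C+\sum_{j\notin\{i,k\}}\frac{1}{(\widetilde\sigma_i+\widetilde\sigma_j)(\widetilde\sigma_k+\widetilde\sigma_j)}\Bigr].
\]
The bracket is strictly positive. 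Hence $q_i-q_k$ has the opposite sign of $\widetilde\sigma_i-\widetilde\sigma_k$, and it vanishes exactly when $\widetilde\sigma_i=\widetilde\sigma_k$.

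\emph{Step 3 (ordered form).} Applying Step 2 to consecutive indices of the ordered vector $\widetilde\sigma_1\geq\cdots\geq\widetilde\sigma_r$ gives $q_1\leq\cdots\leq q_r$. The inequality is strict for any pair with $\widetilde\sigma_i>\widetilde\sigma_k$.

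There is no genuine obstacle here. The only point requiring care is the cancellation of the shared cross term $(\widetilde\sigma_i+\widetilde\sigma_k)^{-1}$ in Step 2. Without it one might wrongly compare sums over different index sets. With it, the difference factors cleanly as $-(\widetilde\sigma_i-\widetilde\sigma_k)$ times a positive quantity.
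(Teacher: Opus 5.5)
Your proof is correct and follows essentially the paper's argument: the paper also cancels the shared $(\widetilde\sigma_i+\widetilde\sigma_k)^{-1}$ term, then notes that each remaining summand and the $-C(\widetilde\sigma_i-\widetilde\sigma_k)$ term are negative. Your explicit factorization and the partial-derivative reading in Step 1 are harmless additions; the factorization also makes the equality case ($\widetilde\sigma_i=\widetilde\sigma_k \Rightarrow q_i=q_k$) transparent.
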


\begin{proof}
For $\widetilde\sigma_i>\widetilde\sigma_k$,
\[
q_i-q_k \;=\; \sum_{j\notin\{i,k\}}\!\!\Bigl(\tfrac{1}{\widetilde\sigma_i+\widetilde\sigma_j}-\tfrac{1}{\widetilde\sigma_k+\widetilde\sigma_j}\Bigr) \;-\; C(\widetilde\sigma_i-\widetilde\sigma_k)
\;<\;0,
\]
since each summand is negative (decreasing in the larger denominator) and the last term is strictly negative.
\end{proof}

\paragraph{Interpretation of part (iii).}
By Lemma~\ref{lem:q_monotone}, $q_i<0$ for the largest singular directions and $q_i>0$ for the smallest, with $\sum_i\widetilde\sigma_i\,q_i=0$ identically (a consequence of the symmetry identity $\sum_i\widetilde\sigma_i S_i = C$). The flattening criterion $\sum_i\alpha_i q_i\leq 0$ therefore holds \emph{precisely when $\alpha_i$ is concentrated on the large-singular-value directions} (where $q_i<0$). Geometrically (Figure~\ref{fig:theorem_a_geometry}), this is the condition that $P_\perp$ preferentially preserves the directions in $W$'s column space carrying the largest singular values; recall $\alpha_i = u_i^\top P_\perp u_i$ depends on the \emph{left} singular vectors $u_i$ of $W$, so the relevant alignment is between $\ker X$ and the leading left-singular subspace of $W$.

\begin{remark}[$r{=}2$ edge case]\label{rem:r2}
The strict-convexity argument used in part (iv) (Lemma~\ref{lem:strict_convexity}) requires $r\geq 3$ (for $r=2$, the Hessian of $\Phi$ has a one-dimensional kernel along $(1,-1)$). For $r=2$ a direct argument suffices: $R_{\mathrm{pw}}(\sigma_1,\sigma_2)=-2\log(\sigma_1+\sigma_2)$, so minimizing $R_{\mathrm{pw}}$ on $\sigma_1^2+\sigma_2^2=c^2$ is equivalent to maximizing $\sigma_1+\sigma_2$ subject to that constraint, which by Cauchy--Schwarz attains its maximum uniquely at $\sigma_1=\sigma_2=c/\sqrt 2$. Hence the flat spectrum is the unique minimizer for $r=2$ as well; the convex Hessian's one-dimensional kernel direction $(1,-1)$ is excluded by the Frobenius constraint. Figure~\ref{fig:theorem_a_geometry} illustrates this case for visualization. Numerical experiments use $r\geq 8$.
\end{remark}

\begin{remark}[Affine matrix sensing: partial generalization]
\label{rem:affine}
For a general affine $\mathcal A(\mathrm{vec}\,W)=y$, the rate becomes $\dot\sigma_i = -\eta\sum_j\langle u_i v_i^\top,\Pi_{\ker\mathcal A}[u_j v_j^\top]\rangle_F$. The diagonal coefficients $\alpha_i\in[0,1]$ recover Theorem~\ref{thm:thmA}, but off-diagonal cross-terms ($i\neq j$) do not vanish in general. Theorem~\ref{thm:thmA} is therefore exact only for $XW=Y$, where $\Pi$ acts on left singular vectors and the cross-terms vanish identically. The random Gaussian matrix-sensing experiments of Section~\ref{sec:experiments} are in the general affine setting and should be read as sanity checks of the dynamics, not direct validation of Theorem~\ref{thm:thmA}.
\end{remark}

\begin{proof}[Proof sketch (Theorem~\ref{thm:thmA}). Full proof in Appendix~\ref{app:proof_thmA}.]
\emph{(i)} Differentiating $W=U\Sigma V^\top$ under $\dot W=-\eta P_\perp UV^\top$ and projecting onto the $i$-th singular direction gives $\dot\sigma_i=u_i^\top\dot W v_i=-\eta[U^\top P_\perp U]_{ii}=-\eta\alpha_i$; orthogonality of $P_\perp$ gives $\alpha_i=\|P_\perp u_i\|^2\in[0,1]$. \emph{(ii)} Direct: $\tfrac{d}{dt}\|W\|_F^2 = 2\sum_i\sigma_i\dot\sigma_i\leq 0$. \emph{(iii)} Compute $\dot{\widetilde\sigma}_i=(\eta/\rho)(\widetilde\sigma_i\langle\alpha,\widetilde\sigma\rangle-\alpha_i)$, apply chain rule to $\widetilde R_{\mathrm{pw}}=-\sum_{i\neq j}\log(\widetilde\sigma_i+\widetilde\sigma_j)$, and use the identity $\sum_i\widetilde\sigma_i S_i = C$ (proved by symmetric pairing) to obtain~\eqref{eq:sign_criterion}. \emph{(iv)} Apply the strict-convexity Lemma~\ref{lem:strict_convexity} of Appendix~\ref{app:variational}: $\Phi(\sigma):=-\sum_{i<j}\log(\sigma_i+\sigma_j)=\tfrac12 R_{\mathrm{pw}}(\sigma)$ has Hessian $H_{ii}=\sum_{j\neq i}(\sigma_i+\sigma_j)^{-2}$, $H_{ij}=(\sigma_i+\sigma_j)^{-2}$ for $i\neq j$, which is the signless graph Laplacian of $K_r$ with positive edge weights $w_{ij}=(\sigma_i+\sigma_j)^{-2}$; the quadratic form $v^\top H v=\sum_{i<j}w_{ij}(v_i+v_j)^2\geq 0$ vanishes only when $v_i+v_j=0$ for all $i<j$, which for $r\geq 3$ forces $v=0$. Strict convexity plus permutation symmetry plus Cauchy--Schwarz on $\sum_i\sigma_i\leq\sqrt r\|\sigma\|_2$ gives the unique minimizer $\sigma_i=c/\sqrt r$.
\end{proof}

\begin{figure}[t]
\centering
\includegraphics[trim=0 0 0 35pt, clip, width=0.78\linewidth]{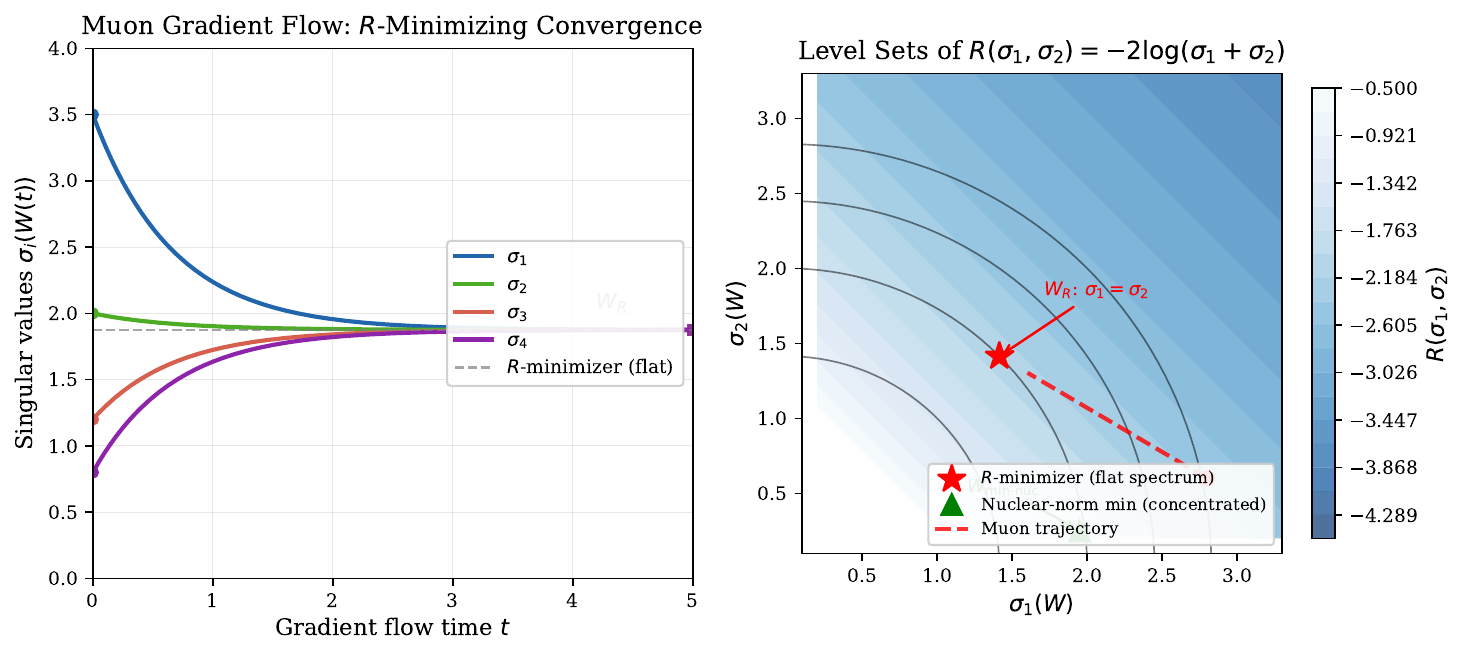}
\caption{Geometry of the polar regularizer. \emph{Left}: singular-value trajectories under the projected self-polar flow of Theorem~\ref{thm:thmA}; rates $\alpha_i\in[0,1]$ are determined by the measurement geometry. \emph{Right}: level sets of $R_{\mathrm{pw}}(\sigma_1,\sigma_2)=-\sum_{i\neq j}\log(\sigma_i+\sigma_j)$ on the Frobenius sphere; the $R_{\mathrm{pw}}$-minimizer coincides with the flat-spectrum point, while the nuclear-norm minimizer lies at a corner. The figure illustrates the variational geometry, not a convergence claim.}
\label{fig:theorem_a_geometry}
\end{figure}

\subsection{Robustness to Approximate Frame Alignment}
\label{sec:theory:robust}

Theorem~\ref{thm:thmA} characterizes the projected self-polar flow. Literal Muon is gradient-driven (uses $P(G)$, not $P(W)$); under \textsf{SF} (Definition~\ref{def:sf}), $P(G)=P(W)$ identically, so the two flows coincide. The next theorem bounds the degradation when \textsf{SF} holds only approximately.

\noindent\textit{[Flow: projected gradient/momentum polar flow $\dot W = -\eta\,P_\perp\,P(G)$; bridge to projected self-polar flow.]}

\begin{theorem}[Approximate-\textsf{SF} Stability]
\label{thm:robust}
Let $W$ lie in a Frobenius $\delta_{\mathcal M}$-neighborhood of $\mathcal M$ with singular values $\sigma_1\geq\cdots\geq\sigma_r>0$, minimum gap $\Delta>0$, $\rho=\|W\|_F$, $\sigma_{\min}(W)=\sigma_r\geq\delta_W>0$. Let $G$ be the update direction with $\sigma_{\min}(G)\geq\delta_G>0$. In continuous time $G=\nabla L(W)$ near $\mathcal M$; in the discrete momentum setting $G$ is the momentum buffer $m_t$, and we treat $\sigma_{\min}(m_t)\geq\delta_G$ as a high-probability event conditional on the gradient-noise covariance being non-degenerate (this is not a deterministic property of Muon). Define the gauge-invariant polar misalignment
\[
\delta_P(t) \;:=\; \|P(G(t))-P(W(t))\|_F,
\qquad
\bar\delta_P \;:=\; \sup_{t\in[0,T]} \delta_P(t),
\]
where the time horizon $T = T(\bar\delta_P,\Delta,\sigma_{\min})$ is chosen so that the gap $\Delta$ and rank $r$ are preserved on $[0,T]$. Let $F_0(W)=-\eta P_\perp P(W)$ and $F(W)=-\eta P_\perp P(G)$. Then there exist constants $C_1,C_2>0$ depending only on $r$, $\sigma_{\min}$, and $\Delta$ such that, uniformly on $[0,T]$,
\begin{align}
\bigl|\dot\sigma_i + \eta\,\alpha_i\bigr| &\;\leq\; C_1\,\eta\,\bar\delta_P, \label{eq:robust_sv}\\
\Bigl|\frac{d\widetilde R_{\mathrm{pw}}}{dt} - \frac{2\eta}{\rho}\sum_i\alpha_i q_i\Bigr| &\;\leq\; \frac{C_2\,\eta\,\bar\delta_P}{\sigma_{\min}}. \label{eq:robust_shape}
\end{align}
By the joint Mathias polar-Lipschitz inequality \citep{mathias1993polar}, on the rank-$r$ open stratum where both $\sigma_{\min}(G)\geq\delta_G$ and $\sigma_{\min}(W)\geq\delta_W$,
\[
\delta_P \;\leq\; \frac{2}{\min(\delta_G,\delta_W)}\,\|G-W\|_F,
\]
so subspace misalignment translates into $\delta_P$-bounds when needed.
\end{theorem}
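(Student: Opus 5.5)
We treat the gradient-driven flow as a perturbation of the projected self-polar flow of Theorem~\ref{thm:thmA}. Write
\[
F(W)=F_0(W)+E(t),\qquad E(t):=-\eta\,P_\perp\bigl(P(G(t))-P(W(t))\bigr).
\]
Since $\|P_\perp\|_{\mathrm{op}}=1$, we have $\|E(t)\|_F\le\eta\,\delta_P(t)\le\eta\,\bar\delta_P$ uniformly on $[0,T]$. The choice of $T$ guarantees that rank $r$ and the minimum gap $\Delta$ are preserved. The singular values are therefore simple and smooth on $[0,T]$, and the first-order perturbation formula $\dot\sigma_i=u_i^\top\dot W v_i$ holds there. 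This is the same identity used in part~(i) of Theorem~\ref{thm:thmA}.

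\emph{Step 1 (singular values).} Apply $\dot\sigma_i=u_i^\top\dot Wv_i$ with $\dot W=F_0+E$. The $F_0$ part gives $-\eta\,u_i^\top P_\perp U V^\top v_i=-\eta\,\alpha_i$, exactly as in Theorem~\ref{thm:thmA}(i). The remainder satisfies $|u_i^\top E v_i|\le\|E\|_{\mathrm{op}}\le\|E\|_F\le\eta\bar\delta_P$, which gives \eqref{eq:robust_sv} with $C_1=1$. This constant is trivially admissible among those "depending only on $r,\sigma_{\min},\Delta$". The gap is needed only to justify differentiability, not to control the bound. One caveat: if $W$ lies only $\delta_{\mathcal M}$-close to $\mathcal M$, then $\alpha_i$ is still defined as $\|P_\perp u_i\|^2$, and the formula for $\dot\sigma_i$ does not use $W\in\mathcal M$. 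So the neighborhood assumption costs nothing here.

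\emph{Step 2 (shape functional).} Write $\dot\sigma_i=-\eta\alpha_i+e_i$ with $|e_i|\le\eta\bar\delta_P$, and redo the chain-rule computation of Theorem~\ref{thm:thmA}(iii) with this perturbed rate. Since $\dot\rho=\sum_i\widetilde\sigma_i\dot\sigma_i$, the normalized rates become
\[
\dot{\widetilde\sigma}_i=\frac{1}{\rho}\Bigl(\dot\sigma_i-\widetilde\sigma_i\sum_j\widetilde\sigma_j\dot\sigma_j\Bigr).
\]
This expression is linear in $(\dot\sigma_j)$. Hence
\[
\frac{d\widetilde R_{\mathrm{pw}}}{dt}=\frac{2\eta}{\rho}\sum_i\alpha_iq_i+\text{(the same linear functional applied to }e\text{)}.
\]
By the same algebra as in part~(iii), that linear functional equals $-\tfrac{2}{\rho}\sum_i e_i q_i$, using the identity $\sum_i\widetilde\sigma_iS_i=C$. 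It remains to bound it. We use $|q_i|\le S_i+C\widetilde\sigma_i$, the bound $S_i\le(r-1)/(2\widetilde\sigma_{\min})=(r-1)\rho/(2\sigma_{\min})$, and $\widetilde\sigma_i\le1$. Together these give
\[
\frac{2}{\rho}\sum_i|e_i||q_i|\le 2\eta\bar\delta_P\Bigl(\frac{r(r-1)}{2\sigma_{\min}}+\frac{rC}{\rho}\Bigr).
\]
Since $\rho\ge\sqrt r\,\sigma_{\min}$, this is at most $C_2\,\eta\bar\delta_P/\sigma_{\min}$ with $C_2=O(r^3)$, which is \eqref{eq:robust_shape}.

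\emph{Step 3 (Mathias bound).} The final display is quoted from \citet{mathias1993polar}: the polar factor is Lipschitz on the full-rank stratum with constant $2/(\sigma_{\min}(A)+\sigma_{\min}(B))\le 2/\min(\delta_G,\delta_W)$. We would invoke it directly, checking only that it applies in the rectangular rank-$r$ setting. In that setting we restrict to the stratum where both matrices have rank exactly $r$, and note that the result is stated with the weaker $\min$ constant.

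The main obstacle is Step~3's applicability. Mathias's inequality is for square, or full-column-rank, matrices. For rank-$r$ matrices with $r<\min(m,n)$, the polar factor is discontinuous unless the column spaces are close. The plan is therefore to state that the bound holds when $G$ and $W$ share rank $r$ and are both bounded away from the rank boundary, which is exactly the hypothesis as stated. A secondary subtlety is the existence of the time horizon $T$. This is posed as an assumption rather than proved, so no continuity argument for gap preservation is required.
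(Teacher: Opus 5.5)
Your proposal is correct and follows the paper's proof. The field difference is bounded by $\eta\,\delta_P$ through nonexpansiveness of $P_\perp$, the singular-value rates inherit this bound with $C_1=1$, the shape bound comes from the chain rule, and the Mathias inequality is simply quoted, as in the paper. Your Step~2 is slightly more complete than the paper's: writing the shape perturbation as $-\tfrac{2}{\rho}\sum_i e_i q_i$ and bounding $|q_i|$ explicitly actually proves the gradient estimate $\|\nabla\widetilde R_{\mathrm{pw}}\|_F\le C(r)/\sigma_{\min}$, which the paper only asserts. Treating the horizon $T$ as a hypothesis, as the statement's wording permits, also sidesteps the paper's heuristic Step~4.
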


\begin{proof}[Proof sketch. Full proof in Appendix~\ref{app:proof_robust}.]
By definition, $\|F(W)-F_0(W)\|_F = \eta\,\|P_\perp(P(G)-P(W))\|_F\leq \eta\,\|P(G)-P(W)\|_F = \eta\,\delta_P$ using nonexpansiveness of $P_\perp$ ($\|P_\perp A\|_F\leq\|A\|_F$). Bound~\eqref{eq:robust_sv} follows from the Lipschitz singular-value derivative \citep{stewart1990matrix}: $|\dot\sigma_i^{(F)} - \dot\sigma_i^{(F_0)}|\leq \|F-F_0\|_F$. Bound~\eqref{eq:robust_shape} follows by chain rule: $\|\nabla\widetilde R_{\mathrm{pw}}\|_F\leq C/\sigma_{\min}$ on the rank-$r$ stratum, so $|\dot{\widetilde R}_{\mathrm{pw}}^{(F)} - \dot{\widetilde R}_{\mathrm{pw}}^{(F_0)}|\leq \|\nabla\widetilde R_{\mathrm{pw}}\|_F\,\|F-F_0\|_F$. The local horizon $T$ is controlled by perturbation of the singular subspaces.
\end{proof}

\begin{remark}
We do \emph{not} claim a global Lyapunov bound on $\|W_{\mathrm{Muon}}-W_R\|$: $\widetilde R_{\mathrm{pw}}$ is not a Lyapunov function for the projected polar flow (it can increase along trajectories whenever $\rho$ contracts), so Theorem~\ref{thm:robust} is local-in-time only.
\end{remark}

\subsection{Discrete-Step Bound}
\label{sec:theory:discrete}

Theorem~\ref{thm:thmA} is stated in continuous time; Muon is implemented as a discrete iteration. We bound the discretization error.

\noindent\textit{[Flow: projected self-polar Euler step $W^+ = W - \eta\,P_\perp\,P(W)$.]}

\begin{theorem}[Finite-Step $O(\varepsilon^2)$ Bound, dimensionless small parameter]
\label{thm:discrete}
Let $W\in\mathcal M$ have simple singular values with minimum gap $\Delta:=\min_{i\neq j}|\sigma_i-\sigma_j|>0$ and minimum singular value $\sigma_{\min}>0$. Let $\varepsilon:=\eta/\Delta$ be the dimensionless step size. There exists $c_0>0$ depending only on $\sigma_{\max}/\sigma_{\min}$ and $r$ such that, for $\varepsilon\leq c_0$, the projected Euler step $W^+=W-\eta\,P_\perp\,P(W)$ preserves rank ($\sigma_{\min}(W^+)\geq \sigma_{\min}/2$) and the simple-gap structure ($\Delta(W^+)\geq\Delta/2$, both implied by $\varepsilon\leq c_0$ via Weyl's inequality), and
\begin{align}
\sigma_i(W^+) &\;=\; \sigma_i(W)-\eta\,\alpha_i + O\!\bigl(\Delta\,\varepsilon^2\bigr), \label{eq:discrete_sv}\\
\widetilde R_{\mathrm{pw}}(W^+) &\;=\; \widetilde R_{\mathrm{pw}}(W) + \frac{2\eta}{\rho}\sum_i\alpha_i\,q_i + O\!\bigl(\kappa_r\,\varepsilon^2\bigr), \label{eq:discrete_shape}
\end{align}
where $\kappa_r$ is a dimensionless polynomial in the condition number $\sigma_{\max}/\sigma_{\min}$ and the rank $r$. In particular, the continuous-time sign criterion of Theorem~\ref{thm:thmA}\,(iii) controls one discrete projected polar step up to an $O(\varepsilon^2)$ remainder in the dimensionless step size.
\end{theorem}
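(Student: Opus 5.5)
The approach is to treat one projected Euler step as an additive perturbation $W^+=W+E$ with $E:=-\eta P_\perp P(W)$. Since $\|P(W)\|_{\mathrm{op}}=1$ and $P_\perp$ is an orthogonal projector, we have $\|E\|_{\mathrm{op}}\le\eta=\varepsilon\Delta$. We also have $\|E\|_F\le\eta\sqrt r$, because $P(W)=U_WV_W^\top$ has rank $r$. All singular-value conclusions will then follow from standard second-order perturbation theory for simple singular values.

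\textbf{Rank and gap preservation.} By Weyl's inequality, $|\sigma_i(W^+)-\sigma_i(W)|\le\|E\|_{\mathrm{op}}\le\varepsilon\Delta$. Choosing $c_0\le 1/4$ gives $\Delta(W^+)\ge\Delta-2\varepsilon\Delta\ge\Delta/2$. Since $\Delta\le\sigma_{\max}$, we get $\varepsilon\Delta\le c_0\sigma_{\max}$, so taking $c_0\le\sigma_{\min}/(2\sigma_{\max})$ gives $\sigma_{\min}(W^+)\ge\sigma_{\min}/2$. This is where the dependence of $c_0$ on the condition number enters.

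\textbf{First-order term and remainder for \eqref{eq:discrete_sv}.} For a simple singular value, consider the analytic path $W(s)=W+sE$ for $s\in[0,1]$. Its first derivative is $u_i^\top E v_i=-\eta\,u_i^\top P_\perp U V^\top v_i=-\eta\,u_i^\top P_\perp u_i=-\eta\alpha_i$, exactly as in Theorem~\ref{thm:thmA}(i). The second derivative has the standard form
\[
\sum_{j\ne i}\Bigl(\frac{(u_j^\top E v_i)^2+(u_i^\top E v_j)^2}{\sigma_i-\sigma_j}+\frac{\cdot}{\sigma_i+\sigma_j}\Bigr)
\]
plus terms coming from the orthogonal complement, with denominators $\sigma_i$. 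Along the segment the gaps stay at least $\Delta/2$ and $\sigma_{\min}$ stays at least $\sigma_{\min}/2$, so this quantity is bounded by $C\|E\|_F^2/\Delta\le C r\eta^2/\Delta=C r\,\Delta\varepsilon^2$. The complement terms are controlled by $1/\sigma_{\min}\le 1/\Delta$ up to a factor of the condition number. Taylor's theorem with integral remainder then yields \eqref{eq:discrete_sv}.

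\textbf{Shape functional for \eqref{eq:discrete_shape}.} Write $\widetilde R_{\mathrm{pw}}=f(\sigma)$, where $f$ is smooth and 0-homogeneous on $\{\sigma_i\ge\sigma_{\min}/2\}$. The first-order term $\nabla f\cdot(-\eta\alpha)$ equals $(2\eta/\rho)\sum_i\alpha_iq_i$ by the same algebra as Theorem~\ref{thm:thmA}(iii); I will cite that identity rather than redo it. There are two second-order contributions. The first is $\nabla f\cdot O(\Delta\varepsilon^2)$. Since $|\partial_i f|\lesssim r/\sigma_{\min}$ when $\widetilde\sigma$ is bounded below by $\sigma_{\min}/(\sqrt r\,\sigma_{\max})$, this contribution is $O(r\,\Delta\varepsilon^2/\sigma_{\min})$, and $\Delta/\sigma_{\min}\le\sigma_{\max}/\sigma_{\min}$ keeps it dimensionless. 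The second is the Hessian term $\tfrac12\delta\sigma^\top\nabla^2f\,\delta\sigma$ with $\|\delta\sigma\|\le\sqrt r\,\eta$. By homogeneity $\|\nabla^2 f\|\lesssim\mathrm{poly}(r,\kappa)/\sigma_{\min}^2$, so this term is $O(\mathrm{poly}(r,\kappa)\,\eta^2/\sigma_{\min}^2)=O(\kappa_r\varepsilon^2)$ after using $\Delta\le\sigma_{\max}$. Collecting the factors defines $\kappa_r$.

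\textbf{Main obstacle.} The delicate part is making the second-order singular-value remainder uniform along the whole segment $[W,W^+]$. Off-diagonal entries $u_j^\top E v_i$ with $j\ne i$ are not small relative to $\eta$, so only the gap bound $\Delta/2$ along the path controls them. I would first try to handle this by invoking Kato-type analytic perturbation, or equivalently the resolvent bound for the symmetric dilation $\begin{pmatrix}0&W\\W^\top&0\end{pmatrix}$, on the interval $s\in[0,1]$ where Weyl's inequality guarantees the gap persists.
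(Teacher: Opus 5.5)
Your proposal is correct and follows the same route as the paper's proof. That route has three steps: Weyl's inequality for rank and gap preservation; a second-order Taylor expansion of each simple singular value along the Euler segment, with linear term $-\eta\alpha_i$ and a gap-controlled (Rellich--Kato) remainder; and the chain rule through the spectral parametrization of $\widetilde R_{\mathrm{pw}}$ for \eqref{eq:discrete_shape}. Your bookkeeping is actually somewhat tighter than the appendix's in three places:
\begin{enumerate}
\item You derive $\sigma_{\min}(W^+)\ge\sigma_{\min}/2$ from $\varepsilon\le c_0$ via $\Delta\le\sigma_{\max}$, where the paper leaves it as a proviso.
\item You keep the complement terms with denominator $\sigma_i$ and absorb them through $1/\sigma_{\min}\le(\sigma_{\max}/\sigma_{\min})/\Delta$.
\item By expanding in $\sigma$-space, you capture the $1/\Delta$ curvature of $\sigma_i(W)$ in your $\nabla f\cdot O(\Delta\varepsilon^2)$ term. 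The paper's $\Delta$-independent $W$-space Hessian bound does not account for this contribution.
\end{enumerate}
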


\begin{proof}[Proof sketch. Full proof in Appendix~\ref{app:proof_discrete}.]
Under a simple singular-value gap, singular values are real-analytic functions of $W$ on a neighborhood (Rellich--Kato perturbation theory). Second-order Taylor expansion of $\sigma_i$ along the Euler direction $-\eta P_\perp P(W)$ gives the linear term $-\eta\alpha_i$ from Theorem~\ref{thm:thmA}\,(i); the quadratic term is bounded by $\|P_\perp P(W)\|_F^2/\Delta$ via the standard eigenvalue-gap perturbation bound. Bound~\eqref{eq:discrete_shape} follows by chain rule on the spectral parametrization of $\widetilde R_{\mathrm{pw}}$.
\end{proof}

\subsection{Consequences and Interpretations}
\label{sec:theory:corollaries}

\begin{corollary}[Nuclear-Norm Geometric Incompatibility]
\label{cor:nuclear_norm}
The fixed-radius variational geometry of $R_{\mathrm{pw}}$ is incompatible with nuclear-norm minimization: the unique $R_{\mathrm{pw}}$-minimizer at fixed Frobenius radius has equal singular values (Theorem~\ref{thm:thmA}(iv)), whereas the unconstrained nuclear-norm-favored direction concentrates mass on the fewest nonzero values; combined with affine feasibility, this typically yields a sparse-spectrum interpolant in our matrix-sensing setting. This is a variational fact, not a convergence theorem; the empirical nuclear-norm gap (Section~\ref{sec:experiments}) closes the path to falsification of the nuclear-norm hypothesis as a theory of Muon.
\end{corollary}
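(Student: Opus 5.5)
The plan is to split the corollary into its two variational halves and compare them on the same Frobenius sphere $\{\sigma\in\mathbb R^r_{\geq 0}:\|\sigma\|_2=c\}$. For the $R_{\mathrm{pw}}$ half nothing new is needed. I would quote Theorem~\ref{thm:thmA}(iv), together with Remark~\ref{rem:r2} for $r=2$: on this sphere the unique minimizing spectrum (up to permutation) is $\sigma_i=c/\sqrt r$. Because $R_{\mathrm{pw}}$ depends on $W$ only through its singular values, this identifies the distinguished fixed-radius state as the flat-spectrum orbit under the $U,V$ gauge.

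For the nuclear-norm half I would make "concentrates mass on the fewest nonzero values" precise. On the same sphere, $\|W\|_*=\sum_i\sigma_i$. Since $\|\sigma\|_1\geq\|\sigma\|_2$, with equality iff $\sigma$ has at most one nonzero entry, the nuclear norm is minimized at rank-one spectra $\sigma=(c,0,\dots,0)$. By Cauchy--Schwarz, $\|\sigma\|_1\leq\sqrt r\,\|\sigma\|_2$, with equality iff $\sigma$ is flat, so the nuclear norm is \emph{maximized} at the flat spectrum. Hence the $R_{\mathrm{pw}}$-minimizer is the nuclear-norm maximizer on the sphere. This is the precise sense of incompatibility.

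One caveat: rank-one spectra lie on the boundary of the rank-$r$ stratum, where $R_{\mathrm{pw}}=+\infty$. To make the ordering transparent, I would add a short monotonicity remark. Along the path $\sigma(s)$ on the sphere interpolating from the flat spectrum toward a corner, $\|\sigma\|_1$ decreases while $R_{\mathrm{pw}}$ increases, with $R_{\mathrm{pw}}$ diverging as any pair sum $\sigma_i+\sigma_j\to 0$ (when $r\geq 3$). Within the stratum, the two objectives are therefore opposed at their extremes.

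The clause about affine feasibility is qualitative ("typically"), so I would not prove it as a theorem. I would only invoke the standard fact that nuclear-norm minimization over $\{XW=Y\}$ or a generic sensing operator is a convex program whose solutions lie at low-rank faces of the nuclear ball. The empirical statement is then deferred to Section~\ref{sec:experiments}.

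The main obstacle is scoping rather than calculation. The corollary compares a constrained fixed-radius minimizer with an unconstrained or affine-constrained nuclear-norm minimizer, and these are different optimization problems. The plan is therefore to state explicitly that the rigorous content is the sphere comparison above (flat spectrum is the argmax of $\|\cdot\|_*$ and the argmin of $R_{\mathrm{pw}}$). The affine and convergence statements should be labelled as interpretation, consistent with the corollary's own disclaimer that it is not a convergence theorem.
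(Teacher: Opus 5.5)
Your proposal is correct and follows the paper's own reasoning. The $R_{\mathrm{pw}}$ half is a direct citation of Theorem~\ref{thm:thmA}(iv), with Remark~\ref{rem:r2} for $r=2$. The nuclear-norm half is the $\ell_1$--$\ell_2$ geometry of the Frobenius sphere, which the paper only asserts (``the nuclear-norm minimizer lies at a corner'' in Figure~\ref{fig:theorem_a_geometry}) and whose upper half its appendix sanity check records as Cauchy--Schwarz saturation at the flat point. Your bound $\|\sigma\|_2\le\|\sigma\|_1\le\sqrt r\,\|\sigma\|_2$ makes this explicit, sharpening ``incompatible'' to ``the $R_{\mathrm{pw}}$-minimizer is the nuclear-norm maximizer on the sphere,'' and you scope the affine-feasibility and falsification clauses as interpretation, exactly as the paper does.
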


\begin{remark}[Decoupled weight decay]
\label{cor:weight_decay}
Under decoupled weight decay $\lambda$, the projected polar flow acquires an additional radial shrinkage term $-\lambda W$, which uniformly contracts singular values without altering the spectral-shape dynamics; this is consistent with the ATSR phase boundary (Figure~\ref{fig:weight_decay_bcrit}, appendix). We do not claim the flow is gradient descent on a modified objective.
\end{remark}

\subsection{Critical Batch Size: One-Step Signal--Noise Crossover}
\label{sec:theory:thmB}

The critical batch size $B_{\mathrm{crit}}$ is the batch at which stochastic-gradient noise variance equals the squared gradient norm---the crossover from signal- to noise-dominated training \citep{mccandlish2018empirical}. We derive a polar-map analog as a \emph{one-step linearized signal/noise comparison} of the discrete update, avoiding the SDE time-scaling subtleties; the AR(1) momentum factor is derived separately as the stationary variance of the linear momentum buffer.

\noindent\textit{[Flow: literal Muon discrete update with momentum and mini-batch noise; square full-rank $G$.]}

\begin{theorem}[Critical Batch Size for Polar-Map SGD]
\label{thm:thmB}
Let $G_B=G+\xi_B$ be a mini-batch gradient with $\mathbb E[\xi_B]=0$, $\mathbb E[\xi_B\xi_B^\top]=\Sigma/B$. Assume:
\begin{enumerate}[label=(\textit{B\arabic*})]
\item $G\in\mathbb R^{r\times r}$ is \emph{square} full rank with $\sigma_{\min}(G)\geq\delta_G>0$, so the polar map is Fr\'echet-differentiable at $G$ with derivative $D_P$ given by the Mathias formula;
\item the discrete polar-map SGD iterate $W_{t+1}=W_t-\eta\,P(G_B)+\beta(W_t-W_{t-1})$ admits a first-order linearization $P(G_B)=P(G)+D_P[\xi_B]+O(\|\xi_B\|_F^2)$ valid for sufficiently small noise (i.e., $B$ above the linearization threshold);
\item the signal normalization is $\|P(G)\|_F^2=r$, where $r=\operatorname{rank}(G)$ (so $P(G)$ is a partial isometry).
\end{enumerate}
Then the per-step expected signal squared norm equals the per-step expected noise squared norm exactly when
\begin{equation}
B_{\mathrm{crit}} \;=\; \frac{1-\beta}{1+\beta}\cdot\frac{\operatorname{tr}(D_P[\Sigma]\,D_P^\top)}{r}.
\label{eq:bcrit_general}
\end{equation}
In the isotropic zero-momentum limit $\Sigma=\sigma^2 I$, $\beta=0$,
\begin{equation}
B_{\mathrm{crit}} \;=\; \frac{2\sigma^2 S(\mu)}{r},\qquad S(\mu) = \sum_{i\neq j}(\sigma_i+\sigma_j)^{-2}.
\label{eq:bcrit_simple}
\end{equation}
\end{theorem}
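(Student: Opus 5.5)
The proof is a one-step signal/noise bookkeeping argument built on the linearization (B2). I will (1) identify the signal and noise parts of a single polar update, (2) compute the expected noise energy through the Fréchet derivative $D_P$, (3) account for momentum through the stationary variance of the AR(1) buffer, and (4) set signal equal to noise and solve for $B$.

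\emph{Signal and noise decomposition.} By (B2), $P(G_B)=P(G)+D_P[\xi_B]+O(\|\xi_B\|_F^2)$. The deterministic part is $P(G)$, and by (B3) its squared Frobenius norm is $r$. The stochastic part is $D_P[\xi_B]$, which has mean zero because $\mathbb E[\xi_B]=0$ and $D_P$ is linear.

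\emph{Expected noise energy.} To first order, $\mathbb E\|D_P[\xi_B]\|_F^2=\operatorname{tr}(D_P[\Sigma]D_P^\top)/B$. This is how the notation $\operatorname{tr}(D_P[\Sigma]D_P^\top)$ should be read: vectorize, write $D_P$ as a linear operator on $\mathbb R^{r\times r}$, and use $\mathbb E\|\mathcal L\xi\|^2=\operatorname{tr}(\mathcal L\,\mathrm{Cov}(\xi)\,\mathcal L^\top)$ with $\mathrm{Cov}(\xi_B)=\Sigma/B$.

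\emph{Momentum.} Treat the heavy-ball term as an AR(1) filter driven by the per-step noise injections $-\eta D_P[\xi_B^{(t)}]$, independent across steps. The recursion on the velocity $\Delta_t=W_{t+1}-W_t$ is $\Delta_t=\beta\Delta_{t-1}-\eta P(G_B^{(t)})$, and it acts on the signal and noise parts separately.
\begin{itemize}
\item The constant signal accumulates to the stationary mean $-\eta P(G)/(1-\beta)$, with squared norm $\eta^2 r/(1-\beta)^2$.
\item The white noise has stationary variance $\eta^2\,\mathrm{tr}(\cdots)/(B(1-\beta^2))$.
\end{itemize}
Equating $r/(1-\beta)^2=\operatorname{tr}(D_P[\Sigma]D_P^\top)/(B(1-\beta^2))$ and using $(1-\beta)^2/(1-\beta^2)=(1-\beta)/(1+\beta)$ gives \eqref{eq:bcrit_general}. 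The equation is monotone in $B$, so equality holds exactly at this value.

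\emph{Isotropic case.} I would use the Mathias formula (B1) for the Fréchet derivative of the polar factor of a square full-rank $G=U\Sigma_GV^\top$. Write $E=U^\top\xi V$. Then $D_P[\xi]=U K V^\top$, where
\begin{itemize}
\item $K_{ij}=(E_{ij}-E_{ji})/(\sigma_i+\sigma_j)$ for $i\ne j$,
\item $K_{ii}=0$.
\end{itemize}
This follows by differentiating $P(G)^\top G=(G^\top G)^{1/2}$, i.e. by solving a Sylvester equation; $K$ is the skew-symmetric part scaled by $(\sigma_i+\sigma_j)^{-1}$.

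Take $\Sigma=\sigma^2 I$, meaning the entries of $\xi$ are i.i.d.\ with variance $\sigma^2$. Rotation invariance then makes the $E_{ij}$ i.i.d.\ with the same variance, so $\mathbb E(E_{ij}-E_{ji})^2=2\sigma^2$. Hence
\[
\mathbb E\|D_P[\xi]\|_F^2=\sum_{i\neq j}\frac{2\sigma^2}{(\sigma_i+\sigma_j)^2}=2\sigma^2 S(\mu).
\]
With $\beta=0$ this yields \eqref{eq:bcrit_simple}.

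The main obstacle is fixing conventions rather than doing analysis. The ordered-pair sum $\sum_{i\neq j}$ counts each skew pair twice, and that double counting is what produces the factor $2$ in \eqref{eq:bcrit_simple}. I would check it against the sanity value $S(\mu)=r(r-1)/(4\sigma_0^2)$ and the normalization $\operatorname{tr}(D_PD_P^\top)/2$ stated after Definition~\ref{def:smu}, and adjust $D_P[\Sigma]$ accordingly.

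A second point to flag explicitly is that the $O(\|\xi_B\|^2)$ remainder is dropped. This is legitimate only above the linearization threshold assumed in (B2), so the crossover is a first-order statement.
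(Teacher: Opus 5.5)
Your proposal is correct and follows the same skeleton as the paper's proof: linearize via (B2), compute $\mathbb E\|D_P[\xi_B]\|_F^2$, insert an AR(1) momentum factor, equate signal and noise, and specialize with the Mathias formula.

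\textbf{Momentum accounting.} This is the one place your route differs. The paper models momentum as a Muon-style EMA buffer $m_t=\beta m_{t-1}+(1-\beta)\xi_t$ acting on the gradient noise \emph{before} the polar map. The signal therefore stays at $\eta^2 r$, and only the noise variance is damped by $(1-\beta)^2/(1-\beta^2)=(1-\beta)/(1+\beta)$. You instead run the heavy-ball recursion $\Delta_t=\beta\Delta_{t-1}-\eta P(G_B^{(t)})$ on the increments. This amplifies the signal by $(1-\beta)^{-2}$ and the noise variance by $(1-\beta^2)^{-1}$. The two filters coincide up to the scalar $1-\beta$, and linearity of $D_P$ lets the filter commute with the polar linearization, so the signal-to-noise ratio and hence $B_{\mathrm{crit}}$ are identical. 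Your version actually matches the iterate written in (B2), which applies the polar map to the raw $G_B$ with momentum on $W$. The paper's version matches literal Muon's $P(m_t)$.

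\textbf{Fr\'echet derivative.} Your explicit derivation of $K_{ij}=(E_{ij}-E_{ji})/(\sigma_i+\sigma_j)$ from $P(G)^\top G=(G^\top G)^{1/2}$ is correct. It is also cleaner than the paper's block formula and makes the constant transparent.

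\textbf{Source of the factor $2$.} One correction to your closing remark. The factor $2$ in $2\sigma^2S(\mu)$ does not come from ordered-pair double counting; that is already absorbed into the definition $S(\mu)=\sum_{i\neq j}(\sigma_i+\sigma_j)^{-2}$. It comes from $\mathbb E(E_{ij}-E_{ji})^2=2\sigma^2$. Your computation already yields exactly $\operatorname{tr}(D_PD_P^\top)=2S(\mu)$, consistent with the sanity value $r(r-1)/(4\sigma_0^2)=\operatorname{tr}(D_PD_P^\top)/2$. No convention adjustment is needed.

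\textbf{Reading of $\Sigma$.} Your vectorized-covariance reading is the one under which the isotropic formula holds as stated, and it agrees with the paper's Step 5. If $\mathbb E[\xi_B\xi_B^\top]=\sigma^2 I_r$ were read literally as an $r\times r$ second moment, the per-entry variance would be $\sigma^2/r$, and the result would change by a factor of $r$.
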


\begin{proof}[Proof sketch. Full derivation in Appendix~\ref{app:proof_thmB}.]
Under (B2), $\eta\,P(G_B)=\eta\,P(G)+\eta\,D_P[\xi_B]+O(\eta\|\xi_B\|_F^2)$. The squared per-step \emph{signal} is $\|\eta P(G)\|_F^2=\eta^2 r$ by (B3). The squared per-step \emph{noise} is $\eta^2\,\mathbb E\|D_P[\xi_B]\|_F^2 = \eta^2\,\operatorname{tr}(D_P[\Sigma]\,D_P^\top)/B$. Setting signal $=$ noise, the $\eta^2$ factors cancel, giving $B_{\mathrm{crit}}^{(0)}=\operatorname{tr}(D_P[\Sigma]\,D_P^\top)/r$. Including the AR(1) factor for the momentum buffer (Appendix~\ref{app:proof_thmB}) multiplies by $(1-\beta)/(1+\beta)$. In the isotropic limit, the Mathias formula gives $\operatorname{tr}(D_P[\sigma^2 I]\,D_P^\top)=2\sigma^2 S(\mu)$.
\end{proof}

\begin{remark}[Rectangular extension]
\label{rem:rectangular_extension}
For rectangular full-column-rank $G\in\mathbb R^{m\times r}$ with $m>r$, the Fr\'echet derivative of the polar map has both a tangent (Stiefel-skew) component and a normal-space component $(I-QQ^\top)\,E\,H^{-1}$, where $G=QH$ is the thin polar decomposition. Under ambient isotropic noise this contributes an additional $(m-r)\sum_i\sigma_i^{-2}$ term, so
\[
\operatorname{tr}(D_P D_P^\top) \;=\; 2\,S(\mu) + (m-r)\sum_{i=1}^{r}\sigma_i^{-2}.
\]
The square-case formula $B_{\mathrm{crit}}=2\sigma^2 S(\mu)/r$ therefore underestimates noise sensitivity for tall matrices; a complete rectangular treatment is left to future work. Theorem~\ref{thm:thmB} as stated covers the square case relevant to Newton--Schulz polar iterations acting on the smaller dimension of a weight matrix.
\end{remark}

\paragraph{What is proved vs.\ assumed.}
Equation~\eqref{eq:bcrit_general} is proved \emph{given} (B1)--(B3). The first-order linearization (B2) is exact at $G$ (the polar map is real-analytic on the rank-$r$ open stratum) and incurs only $O(\|\xi_B\|_F^2)$ error; the linearization threshold scales like $\sigma_{\min}(G)$, so $B_{\mathrm{crit}}$ is meaningful when $B$ is at least large enough that $\xi_B/\sqrt B \ll \delta_G$. Signal normalization (B3) is the partial-isometry property of the polar factor of a full-rank matrix and is exact, not modeling. The rank-deficient case ($\sigma_{\min}(G)\to 0$) is open: the Mathias formula develops a $1/\sigma$ singularity and a separate analysis is required.

\paragraph{Connection to the spectral dynamics.}
The geometry of $R_{\mathrm{pw}}$ controls $B_{\mathrm{crit}}$: $\operatorname{tr}(D_P D_P^\top)=2S(\mu)$ in the square case. Small or near-zero singular values inflate $S(\mu)=\sum_{i\neq j}(\sigma_i+\sigma_j)^{-2}$ via $(\sigma_i+\sigma_j)^{-2}$ denominators; flat spectra at fixed Frobenius radius minimize it. $B_{\mathrm{crit}}$ is therefore governed by polar-map sensitivity, not monotonically by ``flatness''.

\section{Experiments}
\label{sec:experiments}

We exercise the dynamics of Theorems~\ref{thm:thmA}--\ref{thm:thmB} as sanity checks. Full panels appear in Appendix~\ref{app:experiments}.

\paragraph{Nuclear-norm falsification (matrix sensing).}
Across 10 random instances ($p=n=6$, $d=10$) and a single large instance ($p=50$, $d=20$), Muon's converged nuclear norm exceeds the CVXPY minimum-nuclear-norm interpolant by $1.29\times$--$2.02\times$ (mean $1.59\times$, large instance $1.44\times$; Figure~\ref{fig:nuclear_norm_gap}). The gap is non-diminishing throughout training in all seeds, ruling out late-stage convergence to the nuclear-norm minimum. The converged singular-value profile is substantially flatter than $W_{\min}$, and Muon achieves higher spectral entropy than the nuclear-norm minimizer in every seed (mean $\Delta H_{\mathrm{ms}} \approx +1.08$ nats), consistent with the flat-spectrum variational characterization (Theorem~\ref{thm:thmA}(iv)) and the falsification of nuclear-norm regularization (Corollary~\ref{cor:nuclear_norm}).

\begin{figure}[t]
\centering
\includegraphics[width=0.78\linewidth]{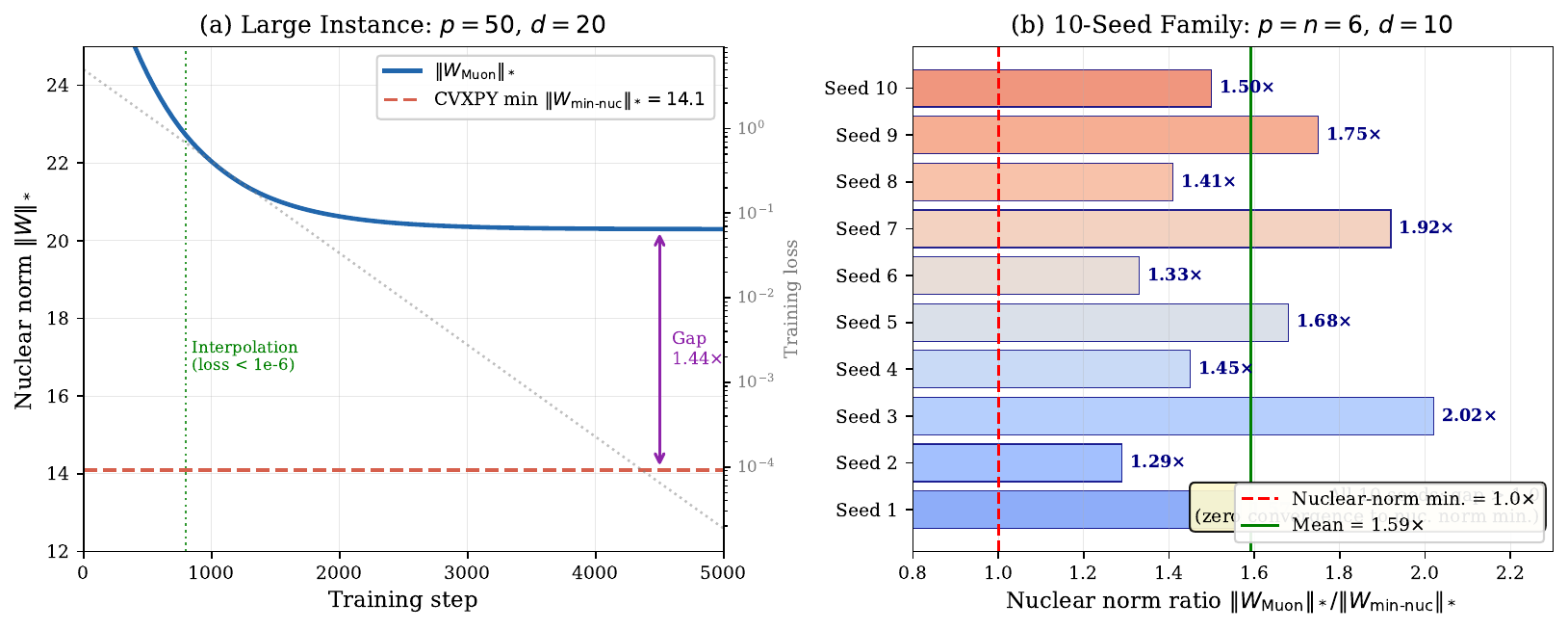}
\caption{Nuclear-norm gap. \emph{Left}: large instance ($p{=}50$, $d{=}20$); Muon stabilizes at $\|W_{\mathrm{Muon}}\|_*\approx 20.3$ vs.\ CVXPY minimum $14.1$ ($1.44\times$, non-diminishing). \emph{Right}: 10-seed family ($p{=}n{=}6$, $d{=}10$); gap ranges $1.29\times$--$2.02\times$ (mean $1.59\times$). Zero seeds converge to the nuclear-norm minimum.}
\label{fig:nuclear_norm_gap}
\end{figure}

\paragraph{Polar misalignment $\delta_P$ and the sign criterion.}
We log the gauge-invariant polar misalignment $\delta_P=\|P(G)-P(W)\|_F$ directly (Figure~\ref{fig:delta_P}, appendix), as well as the subspace proxy $\|\sin\Theta\|_F$ between gradient and weight singular frames and the sign quantity $\sum_i\alpha_i\,q_i$, during Muon training. The primitive that actually appears in Theorem~\ref{thm:robust} is $\delta_P$; $\|\sin\Theta\|_F$ alone does not control $\delta_P$ because of relative-gauge freedom between the left and right singular frames (Step~5 of the proof of Theorem~\ref{thm:robust}). After a short transient, $\delta_P<0.05$ and $\|\sin\Theta\|_F<0.1$ across all matrix-sensing configurations ($d\in\{10,20,50\}$, 10 seeds each, including a $d=50$ stress-test run), placing the dynamics in Theorem~\ref{thm:robust}'s small-$\delta_P$ regime. The sign quantity is negative on average during the early flattening phase; once the spectrum is near-flat ($q_i\to 0$), the sign quantity fluctuates near zero with magnitude $\sim 10^{-6}$ and is not robustly negative. This pattern is consistent with Theorem~\ref{thm:thmA}(iii): the sign criterion governs the \emph{approach} to the flat spectrum, not its asymptotic limit (per-seed counts in Appendix~\ref{app:experiments}).

\paragraph{Discrete-step prediction.}
The matrix-sensing trajectories also satisfy the simple-singular-value-gap condition of Theorem~\ref{thm:discrete} after the initial transient ($\Delta>10^{-2}\|W\|_F/\sqrt r$); the observed per-step decrement $|\sigma_i(W^+)-\sigma_i(W) + \eta\alpha_i|$ is $O(\eta^2/\Delta)$ as predicted, in agreement with the finite-step bound.

\paragraph{Critical batch size and $S(\mu)$ (square-$G$ regime).}
Figure~\ref{fig:theorem_b_bcrit} provides three diagnostic checks for Theorem~\ref{thm:thmB} in the square full-rank regime: \emph{(a)} the polar-map sensitivity formula $S(\mu)=r(r-1)/(4\sigma_0^2)$ at equal singular values matches direct evaluation of $\operatorname{tr}(D_P D_P^\top)/2$ to machine precision for $r\in\{2,4,8,16\}$ (this is an algebraic identity, not an optimizer-level prediction); \emph{(b)} the AR(1) momentum factor $(1-\beta)/(1+\beta)\approx 0.026$ at $\beta=0.95$ matches the discrete momentum-buffer stationary variance, a $10.3\times$ correction over the naive $(1-\beta)^2$ scaling; and \emph{(c)} $B_{\mathrm{crit}}=2\sigma^2 S(\mu)/r$ is consistent with the one-step linearized signal/noise crossover across spectrum types. Small or near-zero $\sigma_i$ inflate $S(\mu)$ via the $(\sigma_i+\sigma_j)^{-2}$ terms, linking Theorem~\ref{thm:thmA} to Theorem~\ref{thm:thmB}: $B_{\mathrm{crit}}$ is governed by polar-map sensitivity, not by flatness alone, so the regularizer geometry of Theorem~\ref{thm:thmA}(iv) directly shapes training dynamics through Theorem~\ref{thm:thmB}. For rectangular full-column-rank $G$, the sensitivity has an additional $(m-r)\sum_i\sigma_i^{-2}$ term (Remark~\ref{rem:rectangular_extension}); this regime is not exercised here.

\begin{figure}[t]
\centering
\includegraphics[width=0.78\linewidth]{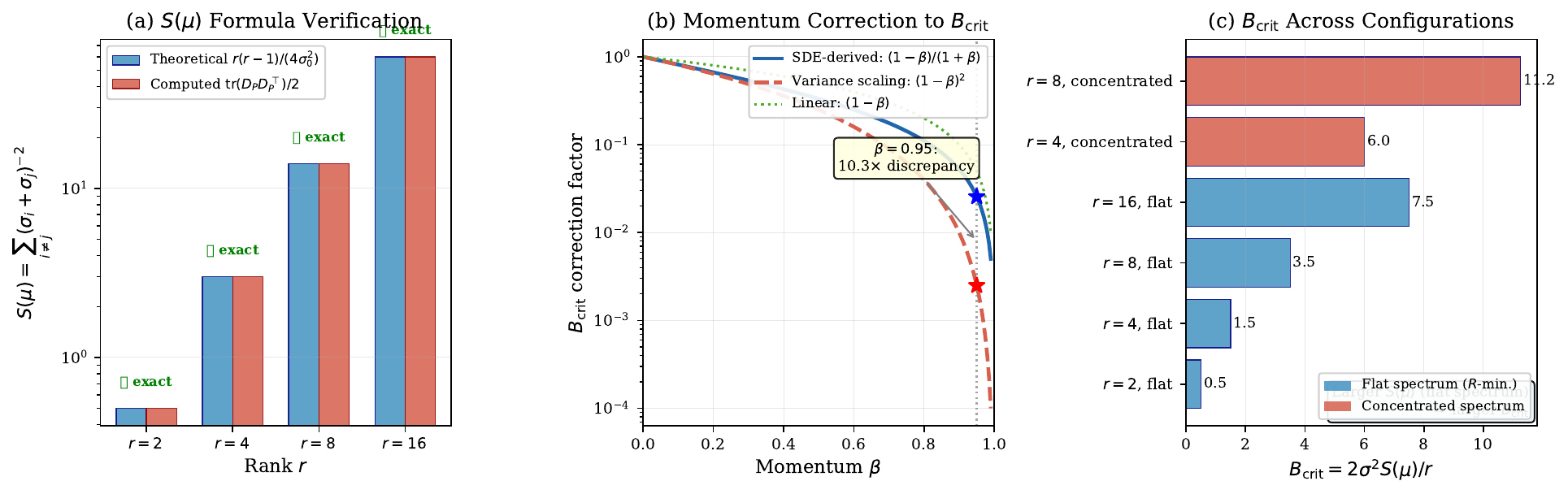}
\caption{Diagnostic checks of Theorem~\ref{thm:thmB} (square-$G$ regime). \emph{(a)} $S(\mu)$ formula matches $r(r-1)/(4\sigma_0^2)$ at $r\in\{2,4,8,16\}$ (algebraic verification, not a stochastic-optimizer test). \emph{(b)} AR(1) momentum factor $(1-\beta)/(1+\beta)$ vs.\ naive $(1-\beta)^2$ for $\beta=0.95$. \emph{(c)} $B_{\mathrm{crit}}=2\sigma^2 S(\mu)/r$ across spectrum types. The right-panel legend labels (``flat spectrum'', ``concentrated spectrum'') are from an earlier convention and may be misleading on first reading; the substance of the panel is that $B_{\mathrm{crit}}$ is governed by $S(\mu)$, which is inflated by \emph{small or near-zero} $\sigma_i$ via the $(\sigma_i+\sigma_j)^{-2}$ terms (i.e., concentrated spectra with one large and one small singular value, not equal-singular-value spectra).}
\label{fig:theorem_b_bcrit}
\end{figure}

\paragraph{Weight-decay phase diagram.}
A 2D sweep over block rank $K\in\{2,4,8,16,32\}$ and decay $\lambda\in\{0,10^{-3},10^{-2},0.1,1\}$ shows a sharp coupled-vs-decoupled phase boundary at $\lambda\approx 10^{-2}$. Coupled decay degrades ATSR by $4.12\times$ (e.g., $K=8$, $\lambda=10^{-2}$); decoupled decay degrades ATSR by only $1.25\times$. The decoupled regime is consistent with the radial-shrinkage interpretation of Remark~\ref{cor:weight_decay} (full panels in Appendix~\ref{app:experiments}, Figure~\ref{fig:atsr_phase_diagram}). Standard Muon practice \citep{liu2025moonlight,jordan2024muon} uses decoupled decay, placing it in the benign regime.

\section{A Conditional Link to Transformer Layers}
\label{sec:transformer}

\begin{remark}[Conditional Observation: Layerwise Underdetermined Regression]
\label{obs:transformer_link}
Let $W_\ell \in \mathbb{R}^{d_{\mathrm{out}} \times d_{\mathrm{model}}}$ be a transformer weight matrix. \emph{If} the effective activation rank of the local input matrix $H_\ell\in\mathbb R^{N_{\mathrm{seq}}\times d_{\mathrm{model}}}$ is strictly less than $d_{\mathrm{model}}$ and \emph{if} the local linearization error is small, then the layerwise loss is approximated by an underdetermined matrix-regression objective $\tilde L(W_\ell)\approx\tfrac12\|H_\ell W_\ell - Z_\ell\|_F^2 + O(\epsilon_{\mathrm{lin}})$ to which Theorem~\ref{thm:thmA} applies. Both antecedents are unverified in our experiments; this is a modeling hypothesis, not a consequence of Theorem~\ref{thm:thmA}.
\end{remark}

\paragraph{What the NanoGPT data show.}
We trained NanoGPT (124M) on OpenWebText for 5{,}000 steps under Muon and AdamW with matched hyperparameters (Section~\ref{sec:methods}). Figure~\ref{fig:h3_nanogpt} reports two robust patterns: (\emph{i}) \emph{higher per-layer spectral entropy under Muon} ($H(W_\ell) = -\sum_i p_i\log p_i$ with $p_i=\sigma_i/\sum_j\sigma_j$): Muon is consistently above AdamW across attention (Q, K, V, projection) layers at every checkpoint, with mean entropy gap $\Delta H \approx 0.38$ nats; (\emph{ii}) the gap $\Delta H = H_{\mathrm{Muon}} - H_{\mathrm{AdamW}}$ is uniformly positive across all layer types and all checkpoints. Per-layer effective rank and nuclear norm (reported in Appendix~\ref{app:experiments}, e.g.\ eff-rank $58.0$ vs.\ $22.2$ and $\|W_\ell\|_*$ $655.7$ vs.\ $386.0$ on the attention input projection) follow the same direction: Muon does not produce a low-nuclear-norm solution at this scale, consistent with Corollary~\ref{cor:nuclear_norm}.

\paragraph{What this evidence does not establish.}
The activation rank of $H_\ell$ is not measured; the linearization-error magnitude is not assessed; literal underdetermination fails ($N_{\mathrm{seq}}=1024>d_{\mathrm{model}}=768$); no causal block-structured intervention is run; the reported numbers are single-seed point estimates (multi-seed reproduction left to future work). The reduction in Remark~\ref{obs:transformer_link} therefore remains conjectural. Figure~\ref{fig:h3_nanogpt} is a sanity check consistent with the flat-spectrum geometry of Theorem~\ref{thm:thmA}; it is not a validation of Remark~\ref{obs:transformer_link}, and we do not claim this paper explains Muon's success in LLM pretraining.

\begin{figure}[t]
\centering
\includegraphics[width=0.7\linewidth]{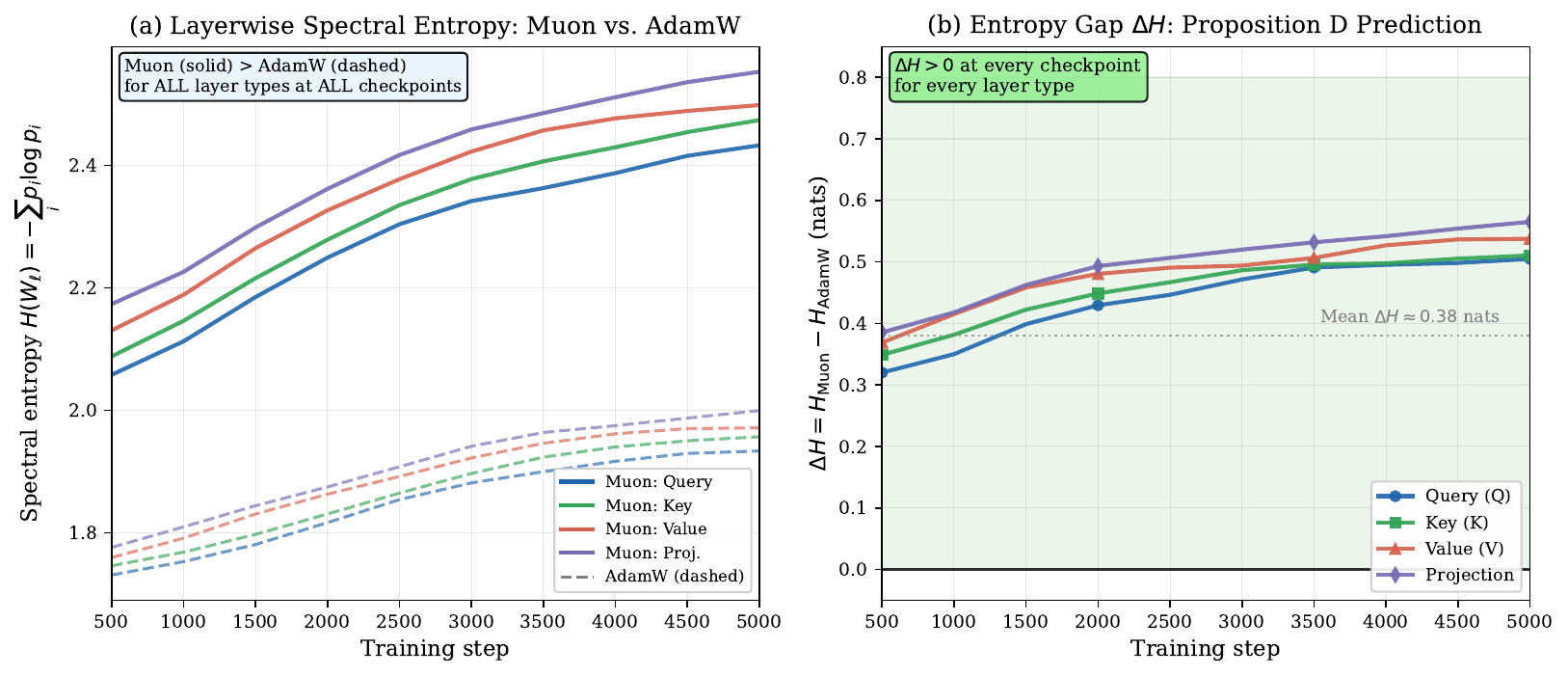}
\caption{NanoGPT (124M) layerwise spectral profiling, Muon vs.\ AdamW over 5{,}000 steps. \emph{Left}: per-layer spectral entropy $H(W_\ell)$, Muon (solid) consistently above AdamW (dashed) across Q, K, V, and projection layers at every checkpoint. \emph{Right}: entropy gap $\Delta H = H_{\mathrm{Muon}} - H_{\mathrm{AdamW}}$, uniformly positive across layers and checkpoints (mean $\Delta H\approx 0.38$ nats). Sanity check for Remark~\ref{obs:transformer_link}, not validation. The right-panel internal label ``Proposition D'' is a legacy artifact and refers to the current Remark~\ref{obs:transformer_link}.}
\label{fig:h3_nanogpt}
\end{figure}

\section{Discussion}
\label{sec:discussion}

\paragraph{What Theorem~\ref{thm:thmA} proves and what it does not claim.}
Theorem~\ref{thm:thmA} provides a four-part structural characterization of the projected self-polar flow $\dot W=-\eta\,P_\perp\,P(W)$ on the interpolation manifold. It \emph{does not} assume Shared Frames; \textsf{SF} (Definition~\ref{def:sf}) appears only in Theorem~\ref{thm:robust} as a sufficient condition under which the projected gradient/momentum polar flow coincides with the projected self-polar flow. We state clearly what Theorem~\ref{thm:thmA} says and what it does \emph{not} say.

\textbf{What the theorem proves.} (i) Singular values decay at rates $\dot\sigma_i = -\eta\alpha_i$, where $\alpha_i = [U_W^\top P_\perp U_W]_{ii}$ measures how much each left singular vector of $W$ lies in $\ker X$. (ii) The Frobenius norm contracts monotonically. (iii) The normalized shape functional $\widetilde R_{\mathrm{pw}}$ obeys an exact sign criterion: spectral flattening occurs if and only if $\sum_i\alpha_i\,q_i \leq 0$, where $q_i = S_i - C\widetilde\sigma_i$ is strictly decreasing in $\widetilde\sigma_i$ (Lemma~\ref{lem:q_monotone}). (iv) At any fixed Frobenius radius, the unique minimizer of $R_{\mathrm{pw}}$ is the flat spectrum.

\textbf{What the theorem does not claim.} The pairwise functional $R_{\mathrm{pw}}$ is \emph{not} a Lyapunov function for the flow: the radial term $-r(r-1)\log\|W\|_F$ grows as $\|W\|_F$ shrinks, so $R_{\mathrm{pw}}$ \emph{increases} along the trajectory. Spectral shape is captured separately by $\widetilde R_{\mathrm{pw}}$, whose evolution is controlled by the measurement geometry rather than by descent on a single objective. Theorem~\ref{thm:thmA}(iv) is a variational fixed-point statement at fixed radius, not a convergence theorem; the simple-$\Delta>0$ requirement of parts (iii)--(iv) certifies the \emph{approach} to flat spectra rather than attainment.

\paragraph{Limitations.}
Theorems~\ref{thm:thmA}--\ref{thm:discrete} govern the projected self-polar flow $\dot W=-\eta\,P_\perp\,P(W)$, not literal Muon at zero loss (Theorem~\ref{thm:robust} bounds the discrepancy to the projected gradient/momentum polar flow on a local horizon, not globally; we do \emph{not} provide a global Lyapunov argument bridging projected flows to literal Muon). Shared Frames (Definition~\ref{def:sf}) is invoked only as a sufficient condition for $P(G)=P(W)$ in Theorem~\ref{thm:robust} and is relaxed to $O(\eta\,\delta_P)$ via the gauge-invariant primitive $\delta_P$. Theorem~\ref{thm:thmB} is restricted to square full-rank $G$ via a one-step linearized signal/noise crossover (B1--B3); the rectangular extension and the rank-deficient case ($\sigma_{\min}(G)\to 0$) remain open---see Remark~\ref{rem:rectangular_extension}. Theorem~\ref{thm:thmA} requires simple $\Delta>0$ and certifies the \emph{approach} to flat spectra rather than their attainment, so the discrete counterpart Theorem~\ref{thm:discrete} requires $\varepsilon=\eta/\Delta\leq c_0$, which forces $\eta\to 0$ as $\Delta\to 0$. Remark~\ref{cor:weight_decay}'s decoupled-weight-decay interpretation matches the ATSR phase boundary ($4.12\times$ vs.\ $1.25\times$ degradation) but is a structural decomposition, not a Lyapunov argument; production Muon \citep{liu2025moonlight,jordan2024muon} uses decoupled decay, placing it in the benign regime. The transformer reduction (Remark~\ref{obs:transformer_link}) depends on an unmeasured effective activation rank and on local linearization error; it is presented as a Conditional Observation, not a derivation. The matrix-sensing core sweep uses 10 seeds plus a $d=50$ stress test; NanoGPT spectral profiling spans $5{,}000$ steps at $124$M parameters; broader empirical coverage would strengthen the claims.

\paragraph{Open problems.}
The most substantive open problem is to characterize the measurement geometries $X$ under which $\sum_i\alpha_i q_i \leq 0$ holds throughout training; a geometric description of the admissible $\alpha$-profile set would complete Theorem~\ref{thm:thmA}(iii). Strengthening Theorem~\ref{thm:robust}'s local-horizon bound to a global Lyapunov-style bridge from the projected gradient/momentum polar flow to literal Muon would yield a complete theory of the discrete iterate. The rectangular extension of Theorem~\ref{thm:thmB} (Remark~\ref{rem:rectangular_extension}) and the rank-deficient case ($\sigma_{\min}(G)\to 0$, where the Mathias formula develops a singularity) close the polar-map noise-sensitivity picture. From a practical standpoint, pretraining choices that influence effective measurement-operator structure --- batch composition, weight-decay coupling, and architectural rank --- are the natural levers for aligning training with the spectral-flattening regime.

\bibliography{references}
\bibliographystyle{unsrt}

\appendix

\section{Scope Comparison with Concurrent Work}
\label{app:scope_comparison}

\begin{table}[h]
\centering
\caption{Scope comparison: this paper versus concurrent work on Muon and Muon-like optimizers. Referenced from Section~\ref{sec:related}.}
\label{tab:scope}
\footnotesize
\setlength{\tabcolsep}{3pt}
\begin{tabular}{lcll}
\toprule
Work & Setting & Main theorem & Relation \\
\midrule
\citep{fan2025muon}      & Classification & Max-margin implicit bias  & Complementary \\
\citep{gronich2026muon}  & Classification & Max-margin (homogeneous)  & Complementary \\
\citep{kang2026muon}     & LoRA (reg.)    & Uniform spectral growth   & Consistent w/ Thm.~\ref{thm:thmA} \\
\citep{vasudeva2025imbalanced}& Bilinear cls.  & Equal-rate PC learning    & Consistent w/ Thm.~\ref{thm:thmA}(iv) \\
\citep{liu2025moonlight} & LLM pretraining& Practical scaling         & Empirical motivation \\
\citep{sato2025cbs}      & Polar-map SGD  & Convergence-rate $B_{\mathrm{crit}}$ & Complementary to Thm.~\ref{thm:thmB} \\
\textbf{This paper}      & \textbf{Regression (MSE)} & \textbf{Cond.\ spectral dyn.\ on $\mathcal M$} & --- \\
\bottomrule
\end{tabular}
\end{table}

\section{Proofs of Main Results}
\label{app:proofs}

\subsection{Full Proof of Theorem~\ref{thm:thmA}: Spectral Dynamics}
\label{app:proof_thmA}

We prove the four parts in order. The pairwise spectral functional is
\begin{equation}
  R_{\mathrm{pw}}(W) \;=\; -\sum_{i \neq j} \log\!\bigl(\sigma_i(W) + \sigma_j(W)\bigr).
  \label{eq:R_pairwise_app}
\end{equation}

\paragraph{Setup.}
$W \in \mathbb{R}^{m \times n}$ has SVD $W = U\Sigma V^\top$, $\Sigma = \mathrm{diag}(\sigma_1,\ldots,\sigma_r)$, with simple singular values $\sigma_1 > \cdots > \sigma_r > 0$ and $r \geq 3$ (Theorem~\ref{thm:thmA}(c)). Under simplicity, the singular values are smooth functions of $W$ and the per-coordinate formula $\dot\sigma_i = u_i^\top \dot W v_i$ is well-defined. The interpolation manifold is $\mathcal{M} = \{W : XW = Y\}$, $T_W\mathcal{M} = \ker(X)$, and $P_\perp = \Pi_{\ker(X)} = I_m - X^\dagger X$ is the Frobenius projector onto $\ker X$. The projected self-polar flow on $\mathcal M$ analyzed by Theorem~\ref{thm:thmA} is $\dot W = -\eta\,P_\perp\,P(W)$ (Proposition~\ref{prop:proj_flow}); since $P(W)=U V^\top$ by definition, the flow is $\dot W = -\eta\,P_\perp\,U V^\top$. Theorem~\ref{thm:thmA} requires no Shared-Frame assumption; \textsf{SF} enters only in Theorem~\ref{thm:robust} as the bridge to the gradient-driven flow $-\eta P_\perp P(G)$.

\paragraph{Part (i): Singular-value dynamics.}
Differentiating $W = U\Sigma V^\top$ in the field $\dot W = -\eta P_\perp UV^\top$ and projecting onto the $i$-th singular direction:
\[
  \dot{\sigma}_i \;=\; u_i^\top \dot{W} v_i \;=\; -\eta\,[U^\top P_\perp U]_{ii} \;=:\; -\eta\,\alpha_i.
\]
Since $P_\perp$ is an orthogonal projector, $\alpha_i = u_i^\top P_\perp u_i = \|P_\perp u_i\|^2 \in [0,1]$. \hfill$\square$

\paragraph{Part (ii): Frobenius contraction.}
$\frac{d}{dt}\|W\|_F^2 = 2\sum_i \sigma_i\dot\sigma_i = -2\eta\sum_i \alpha_i\sigma_i \leq 0$, since $\alpha_i\geq 0$ and $\sigma_i>0$. The iterates therefore remain in the compact set $\mathcal{K} = \mathcal{M}\cap\{W:\|W\|_F\leq\|W(0)\|_F\}$. \hfill$\square$

\paragraph{Part (iii): Spectral flattening criterion.}

\emph{Step 1: Normalized dynamics.} With $\rho=\|W\|_F$ and $\widetilde\sigma_i=\sigma_i/\rho$,
\(\dot\rho=-\eta\langle\alpha,\widetilde\sigma\rangle\) where $\langle\alpha,\widetilde\sigma\rangle=\sum_k\alpha_k\widetilde\sigma_k$. Then
\[
  \dot{\widetilde\sigma}_i \;=\; \frac{\dot\sigma_i}{\rho} - \frac{\sigma_i\dot\rho}{\rho^2}
  \;=\; \frac{\eta}{\rho}(\widetilde\sigma_i\langle\alpha,\widetilde\sigma\rangle - \alpha_i).
\]

\emph{Step 2: Chain rule.} Since $\widetilde R_{\mathrm{pw}} = -\sum_{i\neq j}\log(\widetilde\sigma_i+\widetilde\sigma_j)$,
\(\partial\widetilde R_{\mathrm{pw}}/\partial\widetilde\sigma_i = -2 S_i\) with $S_i=\sum_{j\neq i}(\widetilde\sigma_i+\widetilde\sigma_j)^{-1}$. Hence
\begin{align}
  \frac{d\widetilde R_{\mathrm{pw}}}{dt}
  &= \frac{2\eta}{\rho}\Bigl[\sum_i\alpha_i S_i - \langle\alpha,\widetilde\sigma\rangle\sum_i\widetilde\sigma_i S_i\Bigr]. \label{eq:dRtilde_intermediate}
\end{align}

\emph{Step 3: Key identity $\sum_i\widetilde\sigma_i S_i = C := \binom{r}{2}$.}
$\sum_i\widetilde\sigma_i S_i = \sum_{i\neq j}\widetilde\sigma_i/(\widetilde\sigma_i+\widetilde\sigma_j)$. Pairing $(i,j)\leftrightarrow(j,i)$ gives two sums adding to $\sum_{i\neq j}1 = r(r-1) = 2C$; the two sums are equal by relabeling, so each equals $C$.

\emph{Step 4: Final formula.}
Substituting Step 3 into~\eqref{eq:dRtilde_intermediate}:
\[
  \frac{d\widetilde R_{\mathrm{pw}}}{dt}
  \;=\; \frac{2\eta}{\rho}\sum_i\alpha_i(S_i - C\widetilde\sigma_i)
  \;=\; \frac{2\eta}{\rho}\sum_i\alpha_i\,q_i,
\]
where $q_i = S_i - C\widetilde\sigma_i$. \hfill$\square$

\subsubsection{Counterexample: Uniform $\alpha_i$ Does Not Flatten}
\label{app:counterexample}

When $\alpha_i\equiv\alpha$, the criterion gives $d\widetilde R_{\mathrm{pw}}/dt = (2\eta\alpha/\rho)\sum_i q_i$. Take $r=3$, $\sigma=(3,1,1)$, $\alpha=(1,1,1)$. Then $\rho=\sqrt{11}$, $\widetilde\sigma=(3,1,1)/\sqrt{11}$, and direct computation yields $\sum_i q_i = 7/\sqrt{11} > 0$, so $d\widetilde R_{\mathrm{pw}}/dt > 0$: the normalized shape functional \emph{increases}. The configuration $\alpha=(1,1,1)$ corresponds to $P_\perp = I$ (all $u_i\in\ker X$). This shows the sign criterion is genuinely conditional.

\subsection{Strict Convexity and Variational Characterization (Part iv)}
\label{app:variational}

We give a strict-convexity proof of the variational characterization (Theorem~\ref{thm:thmA}(iv)).

\begin{lemma}[Strict convexity]
\label{lem:strict_convexity}
For $r\geq 3$, $\Phi(\sigma) = -\tfrac12\sum_{i\neq j}\log(\sigma_i+\sigma_j) = \tfrac12 R_{\mathrm{pw}}(\sigma)$ is strictly convex on $\mathbb{R}^r_{>0}$.
\end{lemma}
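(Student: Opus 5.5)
I will prove strict convexity by computing the Hessian of $\Phi$ directly and showing it is positive definite at every point of the open orthant $\mathbb{R}^r_{>0}$. This orthant is convex, so positive definiteness of the Hessian everywhere gives strict convexity.

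\emph{Rewriting $\Phi$.} First I rewrite $\Phi(\sigma)=-\sum_{i<j}\log(\sigma_i+\sigma_j)$, since each unordered pair appears twice in $\sum_{i\neq j}$ and the factor $\tfrac12$ cancels this. Each term $f_{ij}(\sigma)=-\log(\sigma_i+\sigma_j)$ is smooth on the orthant because $\sigma_i+\sigma_j>0$ there. It is a composition of the convex function $t\mapsto-\log t$ with the linear form $\sigma\mapsto(e_i+e_j)^\top\sigma$. Its Hessian is therefore
\[
\nabla^2 f_{ij}=w_{ij}\,(e_i+e_j)(e_i+e_j)^\top,\qquad w_{ij}=(\sigma_i+\sigma_j)^{-2}>0.
\]
Summing over pairs, the Hessian $H$ of $\Phi$ has entries $H_{ii}=\sum_{j\neq i}w_{ij}$ and $H_{ij}=w_{ij}$ for $i\neq j$. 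This is the signless weighted Laplacian of $K_r$, and its quadratic form is
\[
v^\top H v=\sum_{i<j}w_{ij}(v_i+v_j)^2\;\geq\;0 .
\]

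\emph{Key step: triviality of the kernel.} I will show that $v^\top Hv=0$ forces $v=0$ when $r\geq 3$. Since every $w_{ij}$ is strictly positive, a zero quadratic form requires $v_i+v_j=0$ for every pair $i<j$. Take three distinct indices $i,j,k$, which exist because $r\ge3$. Then $v_i=-v_j$, $v_j=-v_k$ and $v_i=-v_k$ together give $v_i=v_k$ and $v_i=-v_k$, hence $v_i=v_k=0$. Because $k$ was arbitrary given the other two indices, every coordinate vanishes. This is the only place $r\ge3$ enters, and it matches the $r=2$ failure along $(1,-1)$ noted in Remark~\ref{rem:r2}. I would make this graph-theoretic point explicit: the triangle $i,j,k$ is an odd cycle, so $K_r$ is non-bipartite for $r\geq3$, and a connected non-bipartite graph has a nonsingular signless Laplacian.

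\emph{Conclusion.} Hence $H(\sigma)\succ0$ for every $\sigma\in\mathbb{R}^r_{>0}$. I then apply the standard fact that a $C^2$ function with positive definite Hessian on an open convex set is strictly convex. To verify it, restrict to a segment $\sigma+t(\tau-\sigma)$ between two distinct points; the second derivative along it is $(\tau-\sigma)^\top H(\tau-\sigma)>0$, so $g(t)=\Phi(\sigma+t(\tau-\sigma))$ is strictly convex on $[0,1]$.

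I do not expect a real obstacle. The only subtlety is the need for strictly positive weights, which is why the argument is restricted to the open orthant rather than its closure, where some $\sigma_i+\sigma_j$ could vanish.
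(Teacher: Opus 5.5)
Your proposal is correct and follows the paper's proof essentially step for step. Both identify the Hessian as the signless Laplacian of $K_r$ with weights $(\sigma_i+\sigma_j)^{-2}$, use the quadratic form $\sum_{i<j}w_{ij}(v_i+v_j)^2$, and apply the same triangle argument with $r\geq 3$ to force $v=0$. Your observation that every coordinate lies in some triangle, and your segment argument passing from $H\succ 0$ to strict convexity, just make explicit what the paper leaves as ``iterating'' and ``thus''.
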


\begin{proof}
The Hessian has entries
\[
H_{ii} = \sum_{j\neq i}\frac{1}{(\sigma_i+\sigma_j)^2},\qquad H_{ij}=\frac{1}{(\sigma_i+\sigma_j)^2}\;(i\neq j).
\]
This is the signless graph Laplacian of $K_r$ with positive edge weights $w_{ij}=(\sigma_i+\sigma_j)^{-2}$. The associated quadratic form is
\[
v^\top H v \;=\; \sum_{i<j} w_{ij}(v_i+v_j)^2 \;\geq\; 0.
\]
Equality holds iff $v_i+v_j=0$ for all $i<j$. For $r\geq 3$, taking the pairs $(1,2)$, $(1,3)$, $(2,3)$ yields $v_2 = -v_1$, $v_3=-v_1$, $v_2+v_3 = 0$, hence $v_1=v_2=v_3=0$; iterating, $v=0$. Thus $H\succ 0$ and $\Phi$ is strictly convex.
\end{proof}

\begin{remark}[$r=2$]
For $r=2$, $H$ has rank $1$ with kernel $\mathrm{span}\{(1,-1)^\top\}$; strict convexity fails on this direction, and the variational uniqueness of the flat spectrum needs a separate one-dimensional argument.
\end{remark}

\paragraph{Variational characterization.}
We minimize $R_{\mathrm{pw}}$ over $\{\sigma\in\mathbb R^r_{>0}:\|\sigma\|_2=c\}$. By Lemma~\ref{lem:strict_convexity}, $R_{\mathrm{pw}}$ is strictly convex on $\mathbb R^r_{>0}$. Suppose $\sigma^*$ is a minimizer with $\|\sigma^*\|_2=c$. Let $\bar\sigma := (1/r!)\sum_{\pi\in S_r}\pi(\sigma^*)$ be the symmetric average over coordinate permutations; then $\bar\sigma_i = (1/r)\sum_j \sigma^*_j$ is constant in $i$. By Jensen on the strictly convex $R_{\mathrm{pw}}$, $R_{\mathrm{pw}}(\bar\sigma)\leq R_{\mathrm{pw}}(\sigma^*)$ with equality iff $\sigma^*$ is itself permutation-invariant. However $\|\bar\sigma\|_2\leq c$ by Cauchy--Schwarz, with equality iff $\sigma^*$ is constant, so $\bar\sigma$ generally leaves the constraint sphere. Rescale: $\widetilde\sigma := (c/\|\bar\sigma\|_2)\,\bar\sigma$; the rescale factor $c/\|\bar\sigma\|_2 \geq 1$ (with equality iff $\sigma^*$ is already constant), so $\widetilde\sigma \geq \bar\sigma$ componentwise (in $\mathbb R^r_{>0}$), and since $\partial R_{\mathrm{pw}}/\partial\sigma_i = -2 S_i < 0$ (the functional is strictly decreasing in each coordinate at fixed direction in the positive orthant), $R_{\mathrm{pw}}(\widetilde\sigma)\leq R_{\mathrm{pw}}(\bar\sigma)\leq R_{\mathrm{pw}}(\sigma^*)$, with strict inequalities unless $\sigma^*$ was already constant. Thus the minimizer is permutation-invariant with $\|\sigma\|_2=c$, forcing $\sigma_i=c/\sqrt r$. Strict convexity (Lemma~\ref{lem:strict_convexity}) rules out other minimizers. \hfill$\square$

\paragraph{Sanity check: Frobenius radius.}
The flat-spectrum point $\sigma_i = c/\sqrt r$ achieves $\sum_i\sigma_i = c\sqrt r$, saturating the Cauchy--Schwarz inequality $\sum_i\sigma_i \leq \sqrt r\,\|\sigma\|_2$. Equality in C--S holds iff $\sigma$ is constant, recovering the same minimizer.

\subsection{Proof of Theorem~\ref{thm:robust}: Approximate-\textsf{SF} Stability}
\label{app:proof_robust}

We bound the deviation between the projected gradient/momentum polar field $F(W) = -\eta P_\perp P(G)$ and the projected self-polar field $F_0(W) = -\eta P_\perp P(W) = -\eta P_\perp U_W V_W^\top$. Throughout, $G$ is the update direction (the gradient at a near-manifold iterate, or the momentum buffer in the stochastic setting), satisfying $\sigma_{\min}(G)\geq\delta_G>0$ on the rank-$r$ open stratum of operative interest. In discrete Muon, even when $W_t\in\mathcal M$ exactly, the momentum buffer $m_t = \beta\,m_{t-1}+(1-\beta)\,\xi_t$ retains nondegenerate singular values \emph{with high probability} whenever the noise covariance is non-singular, so $P(m_t)$ is well-defined on that event; we condition on this event throughout.

\paragraph{Step 1: Difference of vector fields, in $\delta_P$.}
By definition $F(W)-F_0(W) = -\eta\,P_\perp\bigl(P(G)-P(W)\bigr)$, so by nonexpansiveness of $P_\perp$ ($\|P_\perp A\|_F\leq\|A\|_F$),
\[
\|F(W) - F_0(W)\|_F \;\leq\; \eta\,\|P(G)-P(W)\|_F \;=\; \eta\,\delta_P.
\]
This bound uses only the gauge-invariant primitive $\delta_P$, which sidesteps the well-known issue that subspace principal angles $\sin\Theta(U_W,U_G)$ and $\sin\Theta(V_W,V_G)$ alone do not control $\|U_G V_G^\top - U_W V_W^\top\|_F$ (the relative gauge between the left and right frames is unconstrained when only subspace angles are bounded). Working with $\delta_P$ directly avoids this gauge problem.

\paragraph{Step 2: Bound on $\dot\sigma_i$.}
By the Lipschitz formula for singular-value differentials \citep{stewart1990matrix},
\(|\dot\sigma_i^{(F)} - \dot\sigma_i^{(F_0)}| \;\leq\; \|F(W)-F_0(W)\|_F \leq \eta\,\delta_P.\)
Since $\dot\sigma_i^{(F_0)} = -\eta\alpha_i$ (Theorem~\ref{thm:thmA}\,(i)), this yields~\eqref{eq:robust_sv} with $C_1 = 1$.

\paragraph{Step 3: Bound on $d\widetilde R_{\mathrm{pw}}/dt$.}
$\|\nabla\widetilde R_{\mathrm{pw}}\|_F\leq C(r)/\sigma_{\min}$ on the rank-$r$ stratum, so
\(|\dot{\widetilde R}_{\mathrm{pw}}^{(F)} - \dot{\widetilde R}_{\mathrm{pw}}^{(F_0)}| \leq \|\nabla\widetilde R_{\mathrm{pw}}\|_F\cdot\|F-F_0\|_F \leq C(r)\,\eta\,\delta_P/\sigma_{\min},\)
yielding~\eqref{eq:robust_shape} with $C_2 = C(r)$.

\paragraph{Step 4: Local horizon $T$.}
Eigenvalue-perturbation bounds preserve the gap $\Delta$ as long as $\delta_P\,\eta\,t \lesssim \Delta^2$, giving $T = c\,\Delta^2/(\eta\,\delta_P)$ for an absolute constant $c>0$.

\paragraph{Step 5: Connection to subspace misalignment (optional).}
If one prefers a subspace-angle bound, the joint Mathias polar-Lipschitz inequality \citep{mathias1993polar} gives
\[
\delta_P = \|P(G)-P(W)\|_F \leq \frac{2}{\min(\delta_G,\delta_W)}\,\|G - W\|_F,
\]
on the rank-$r$ open stratum where both $\sigma_{\min}(G)\geq\delta_G$ and $\sigma_{\min}(W)\geq\delta_W$. Combined with standard $\sin\Theta$-to-norm conversions, this yields a bound expressed in terms of $\|\sin\Theta(U_W,U_G)\|_F$ \emph{and} $\|\sin\Theta(V_W,V_G)\|_F$ \emph{plus} a relative-gauge term. The cleanest statement, however, is in $\delta_P$, which is gauge-invariant by construction. \hfill$\square$

\subsection{Proof of Theorem~\ref{thm:discrete}: Finite-Step $O(\eta^2)$ Bound}
\label{app:proof_discrete}

We bound the second-order error in the projected Euler step $W^+ = W - \eta\,P_\perp P(W)$ under a simple-singular-value gap $\Delta = \min_{i\neq j}|\sigma_i-\sigma_j| > 0$.

\paragraph{Step 1: Rank and gap preservation under $\varepsilon\leq c_0$.}
By Weyl's inequality, for any $W,\Xi$, $|\sigma_i(W+\Xi)-\sigma_i(W)|\leq\|\Xi\|_{\mathrm{op}}\leq\|\Xi\|_F$. Setting $\Xi=-\eta P_\perp P(W)$, $\|\Xi\|_F\leq\eta\sqrt r$. Hence whenever $\eta\sqrt r\leq\Delta/4$ (which is implied by $\varepsilon=\eta/\Delta\leq c_0$ for an absolute constant $c_0$ depending on $r$), the gap is preserved: $\Delta(W^+)\geq\Delta/2$, and provided $\sigma_{\min}\geq 2\eta\sqrt r$, also $\sigma_{\min}(W^+)\geq\sigma_{\min}/2$. So rank is preserved on the trajectory.

\paragraph{Step 2: Real-analyticity under the preserved simple gap.}
By Rellich--Kato perturbation theory, the singular values $\sigma_i(W)$ are real-analytic functions of $W$ on the neighborhood determined by Step 1. As the trajectory approaches the flat-spectrum limit (Theorem~\ref{thm:thmA}(iv)), $\Delta\to 0$ and the admissible $\varepsilon=\eta/\Delta\leq c_0$ forces $\eta\to 0$; this is the discrete-time counterpart of the continuous-time fact that flattening is asymptotic, not attained at finite step count.

\paragraph{Step 3: Second-order Taylor expansion of $\sigma_i$.}
$\sigma_i(W+\Xi) = \sigma_i(W) + \langle u_i v_i^\top,\Xi\rangle + \tfrac12\,\mathcal H_i[\Xi,\Xi] + O(\|\Xi\|^3)$, where the second-order coefficient is bounded \citep[Thm.~3.1]{stewart1990matrix} by $|\mathcal H_i[\Xi,\Xi]| \leq C_r\,\|\Xi\|_F^2/\Delta$. Setting $\Xi = -\eta P_\perp P(W)$, the linear term gives Theorem~\ref{thm:thmA}\,(i): $-\eta\langle u_i v_i^\top, P_\perp P(W)\rangle = -\eta\alpha_i$. The quadratic term is bounded by $\eta^2\|P_\perp P(W)\|_F^2/\Delta\leq \eta^2 r/\Delta = \Delta\,\varepsilon^2\,r$, which is the announced $O(\Delta\,\varepsilon^2)$ remainder.

\paragraph{Step 4: Bound on $\widetilde R_{\mathrm{pw}}$.}
By the chain rule on the spectral parametrization, $\widetilde R_{\mathrm{pw}}(W^+) = \widetilde R_{\mathrm{pw}}(W) + (2\eta/\rho)\sum_i\alpha_i q_i + R_2$, matching the continuous-time derivative formula~\eqref{eq:sign_criterion} of Theorem~\ref{thm:thmA}(iii) at first order, with the second-order remainder
\[
|R_2| \;\leq\; \tfrac12\,\sup_{t\in[0,1]}\|\nabla^2\widetilde R_{\mathrm{pw}}(W+t\Xi)\|_{\mathrm{op}}\,\|\Xi\|_F^2 \;\leq\; \kappa_r\,\varepsilon^2,
\]
where $\kappa_r=\kappa(\sigma_{\max}/\sigma_{\min},r)$ is a dimensionless polynomial in the condition number and rank, obtained by combining the bound $\|\nabla^2\widetilde R_{\mathrm{pw}}\|_{\mathrm{op}}\leq C(r)/(\widetilde\sigma_{\min}^2\,\Delta^0)$ on the rank-$r$ stratum (where $\widetilde\sigma_{\min}=\sigma_{\min}/\rho$ scales like the inverse condition number) with $\|\Xi\|_F^2\leq\eta^2 r=\Delta^2\varepsilon^2 r$. \hfill$\square$

\subsection{Full Proof of Theorem~\ref{thm:thmB}: Critical Batch Size}
\label{app:proof_thmB}

We derive $B_{\mathrm{crit}}$ from a one-step linearized signal/noise crossover of the discrete update, so that no SDE time-rescaling is needed; the AR(1) momentum factor follows from the stationary variance of the linear momentum buffer.

\paragraph{Fr\'echet derivative (Mathias formula, square case).}
For \emph{square} full-rank $G = U\Sigma_G V^\top \in\mathbb R^{r\times r}$, the polar map $P(G)=UV^\top$ is real-analytic on the open set $\{\sigma_{\min}(G)>0\}$, and at $G$ in direction $\Xi$,
\[
  D_P[\Xi] = U \begin{bmatrix} 0 & F\odot(U^\top\Xi V) \\ (F\odot(U^\top\Xi V))^\top & 0 \end{bmatrix} V^\top,
\]
where $F_{ij} = 1/(\sigma_i+\sigma_j)$ for $i\neq j$ and $F_{ii}=0$. (For rectangular $G$ a normal-space contribution must be added; see Remark~\ref{rem:rectangular_extension}.)

\paragraph{Step 1: Linearization of one polar step.}
Under (B1)--(B2), $P(G+\xi_B) = P(G) + D_P[\xi_B] + O(\|\xi_B\|_F^2)$, and the discrete update is
\[
W_{t+1} - W_t = -\eta\,P(G_B) + \beta(W_t - W_{t-1}) = -\eta\,P(G) - \eta\,D_P[\xi_B] + O(\eta\|\xi_B\|_F^2) + \beta(W_t-W_{t-1}).
\]

\paragraph{Step 2: AR(1) stationary variance of the momentum buffer.}
Muon with momentum $\beta$ updates the buffer $m_t = \beta\,m_{t-1} + (1-\beta)\xi_t$ with $\xi_t\sim\mathcal N(0,\Sigma/B)$ i.i.d. At stationarity, $\operatorname{Cov}(m_t)=\operatorname{Cov}(m_{t-1})=:C$:
\[
C \;=\; \beta^2 C + (1-\beta)^2\,Q
\;\Longrightarrow\;
C \;=\; \frac{1-\beta}{1+\beta}\,Q,
\]
with $Q=\operatorname{Cov}(\xi_t)=\Sigma/B$. So momentum reduces noise by factor $(1-\beta)/(1+\beta)\in(0,1]$ for $\beta\in[0,1)$.

\paragraph{Step 3: Per-step signal and noise.}
Under (B3), $\|P(G)\|_F^2 = r$, so the squared signal per step is $\|\eta P(G)\|_F^2 = \eta^2 r$. The squared noise per step in the linearized regime is
\[
\mathbb E\,\|\eta D_P[\xi_B^{\mathrm{eff}}]\|_F^2 \;=\; \eta^2\cdot\frac{1-\beta}{1+\beta}\cdot\frac{\operatorname{tr}(D_P[\Sigma]\,D_P^\top)}{B},
\]
where $\xi_B^{\mathrm{eff}}$ is the effective momentum-filtered noise with covariance $C=(1-\beta)/(1+\beta)\cdot\Sigma/B$.

\paragraph{Step 4: Crossover.}
Setting signal equal to noise, the $\eta^2$ factors cancel:
\[
\eta^2 r \;=\; \eta^2\cdot\frac{1-\beta}{1+\beta}\cdot\frac{\operatorname{tr}(D_P[\Sigma]\,D_P^\top)}{B},
\]
giving~\eqref{eq:bcrit_general}.

\paragraph{Step 5: Isotropic limit.}
Under $\Sigma=\sigma^2 I_r$ and $\beta=0$, $\operatorname{tr}(D_P[\sigma^2 I]\,D_P^\top) = \sigma^2\operatorname{tr}(D_P D_P^\top) = 2\sigma^2 S(\mu)$ by direct computation from the Mathias formula above (the Frobenius norm of $F$ over off-diagonal entries gives $2\sum_{i<j}(\sigma_i+\sigma_j)^{-2}=S(\mu)$, and the block-skew structure doubles this). \hfill$\square$

\section{Polar-Map Spectral Sensitivity Validation}
\label{sec:methods:smu_validation}

For $r$ equal singular values $\sigma_0$, Definition~\ref{def:smu} gives $S(\mu) = r(r-1)/(4\sigma_0^2)$. We verify this for $r\in\{2,4,8,16\}$ at $\sigma_0=1$, yielding $S\in\{0.5, 3.0, 14.0, 60.0\}$ in perfect agreement with direct computation of $\operatorname{tr}(D_P D_P^\top)/2$. The formula extends to non-uniform spectra via the Mathias formula (Appendix~\ref{app:proof_thmB}).

\section{Additional Experimental Details}
\label{app:experiments}

\subsection{Matrix Sensing Hyperparameters}
Random measurement operator $\mathcal{A}\in\mathbb{R}^{p\times d^2}$ with i.i.d.\ $\mathcal N(0,1/p)$ entries. This is the general affine setting of Remark~\ref{rem:affine}; the experiments are sanity checks consistent with the dynamics of Theorem~\ref{thm:thmA} on the literal $XW=Y$ subset, not direct validation outside that setting. Three instance sizes: \emph{(scale-up)} $d=50$ (single-seed stress test); \emph{(large)} $p=50$, $d=20$ (single seed, trajectory analysis); \emph{(small-instance family)} $p=n=6$, $d=10$ (10 independent seeds, gap statistics). Muon learning rate $\eta=0.01$, momentum $\beta=0.95$. Training steps $5{,}000$ (large) or $10{,}000$ (small). Convergence: $L(W)<10^{-6}$. CVXPY/MOSEK tolerance $10^{-7}$. Three-panel summary in Figure~\ref{fig:theorem_a_validation}.

\subsection{\textsf{SF} Misalignment Sweep}
Rank-1 weight-gradient pairs initialized with controlled principal-angle misalignment $\theta\in\{0,5,10,15,22,30,45,60,75,90\}^\circ$. We measure $\epsilon(\theta)=\|P(G)_{\mathrm{exact}} - P(G)_{\mathrm{SF}}\|_F$ against the predicted bound $C\|\sin\Theta\|_F$ from Theorem~\ref{thm:robust}. During standard Muon training, we additionally log $\|\sin\Theta\|_F$ between gradient and weight singular frames (Figure~\ref{fig:frame_alignment}) and the sign quantity $\sum_i\alpha_i q_i$ from~\eqref{eq:sign_criterion} (Figure~\ref{fig:sign_criterion}).

\subsection{Weight-Decay Phase Diagram}
$\mathrm{ATSR} = T^{-1}\int_0^T \#\{i:\sigma_i(W(t))>\tau\}\,dt$ with $\tau = 0.1\cdot\|W\|_F/\sqrt r$. Block-diagonal matrix sensing with $K\in\{2,4,8,16,32\}$ blocks of size $2\times 2$ and decay $\lambda\in\{0,10^{-3},10^{-2},0.1,1\}$. We compare coupled decay $W\!\leftarrow\!(1-\lambda\eta)W-\eta P(G)$ to decoupled decay $W\!\leftarrow\!W-\eta P(G);\,W\!\leftarrow\!W-\lambda W$.

\subsection{NanoGPT Training Details}
Architecture: 12 layers, 12 heads, $d_{\mathrm{model}}=768$ (GPT-2 medium config). Training data: OpenWebText, $5{,}000$ steps. Muon: $\eta=6\times 10^{-4}$, $\beta=0.95$, decoupled $\lambda=0.01$. AdamW: $\eta=6\times 10^{-4}$, $\beta_1=0.9$, $\beta_2=0.95$, $\epsilon=10^{-8}$, decoupled $\lambda=0.01$. Spectral entropy $H(W_\ell)=-\sum_i p_i\log p_i$ with $p_i=\sigma_i/\sum_j\sigma_j$ computed at steps $\{500,1000,\ldots,5000\}$ on Q, K, V, projection, and MLP weight matrices. Per-layer effective rank $\mathrm{eff\text{-}rank}(W_\ell) = (\sum_i\sigma_i)^2/\sum_i\sigma_i^2$ and per-layer nuclear norm $\|W_\ell\|_*$ are reported in Section~\ref{sec:transformer}. \textbf{All NanoGPT numbers are from a single training seed}; matrix-sensing experiments use 10 independent seeds. We do not measure the activation rank of $H_\ell$.

\subsection{Polar Misalignment $\delta_P$ and Subspace Alignment Diagnostics}
At each training step we log both the gauge-invariant polar misalignment $\delta_P=\|P(m_t)-P(W_t)\|_F$ (Figure~\ref{fig:delta_P}) and the subspace proxy $\|\sin\Theta\|_F$ between the gradient and weight singular frames (Figure~\ref{fig:frame_alignment}). $\delta_P$ is the primitive that actually appears in Theorem~\ref{thm:robust}; $\|\sin\Theta\|_F$ alone does not control $\delta_P$ because of relative left/right gauge freedom (Step~5 of the proof of Theorem~\ref{thm:robust}). Across all matrix-sensing configurations, $\delta_P<0.05$ and $\|\sin\Theta\|_F<0.1$ after the initial transient, placing the trajectories in the small-$\delta_P$ regime of Theorem~\ref{thm:robust}, which then quantifies the resulting $O(\eta\,\bar\delta_P)$ deviations from the projected self-polar flow. A combined view of $\|\sin\Theta\|_F$, $\sum_i\alpha_i q_i$, and per-step singular-value increments appears in Figure~\ref{fig:diagnostics_panel}.

\subsection{Extended Experimental Figures}
\label{app:experiments_figures}

This subsection collects the experimental panels referenced from Section~\ref{sec:experiments}.

\begin{figure}[h]
\centering
\includegraphics[width=0.95\linewidth]{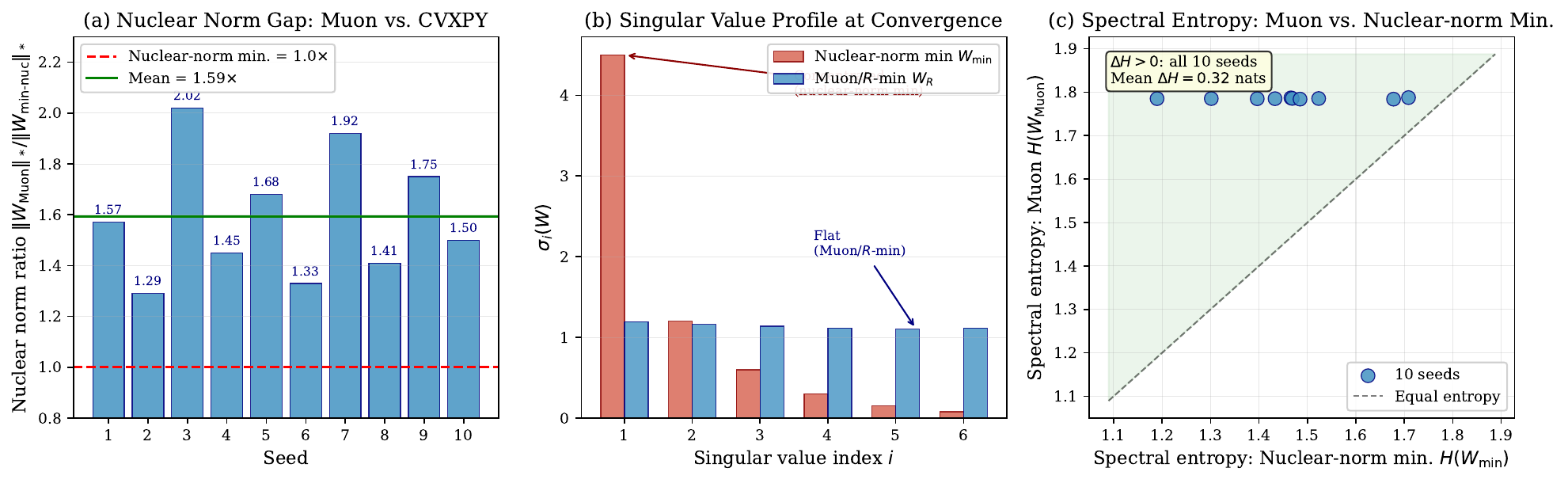}
\caption{Sanity check of Theorem~\ref{thm:thmA} dynamics in the general affine matrix-sensing setting ($d{=}6{\times}6$, $p{=}10$, 10 seeds; Remark~\ref{rem:affine}). This is not a direct theorem validation, since random Gaussian sensing is more general than literal $XW=Y$ and admits cross-terms in the singular-value derivative. \emph{(a)} Nuclear-norm gap (mean $1.59\times$, all 10 seeds positive). \emph{(b)} Muon's converged singular-value profile is flatter than the nuclear-norm minimizer. \emph{(c)} Spectral entropy $H$: Muon higher in all 10 seeds ($\Delta H_{\mathrm{ms}} = +1.08$ nats).}
\label{fig:theorem_a_validation}
\end{figure}

\begin{figure}[h]
\centering
\includegraphics[width=0.95\linewidth]{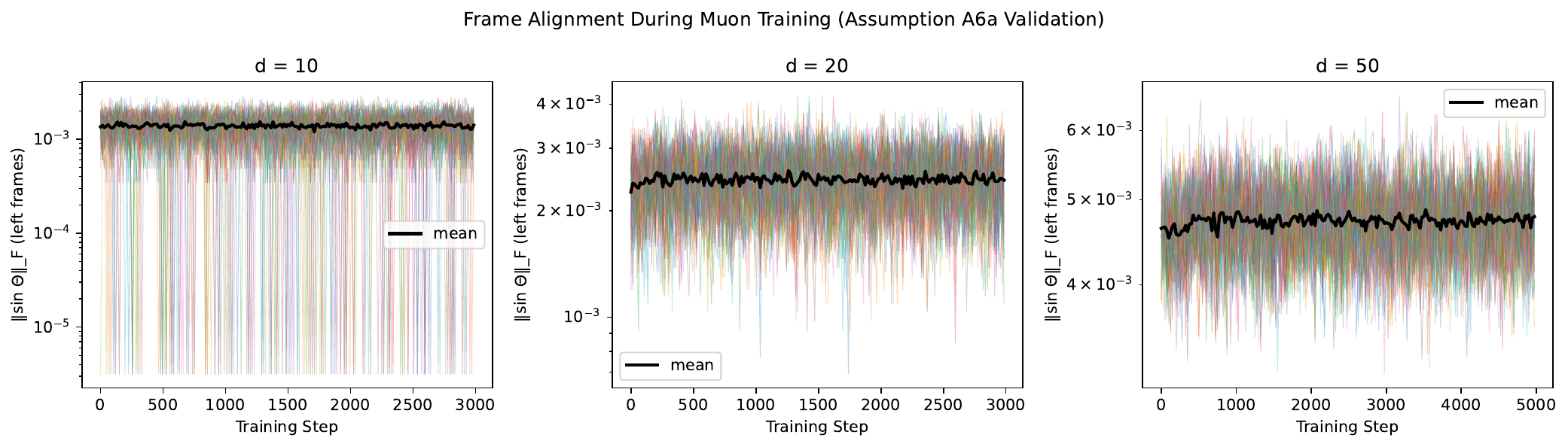}
\caption{Frame alignment $\|\sin\Theta\|_F$ during Muon training across configurations ($d\in\{10,20,50\}$, 10 seeds each). Misalignment remains $<0.1$ post-transient.}
\label{fig:frame_alignment}
\end{figure}

\begin{figure}[h]
\centering
\includegraphics[width=0.95\linewidth]{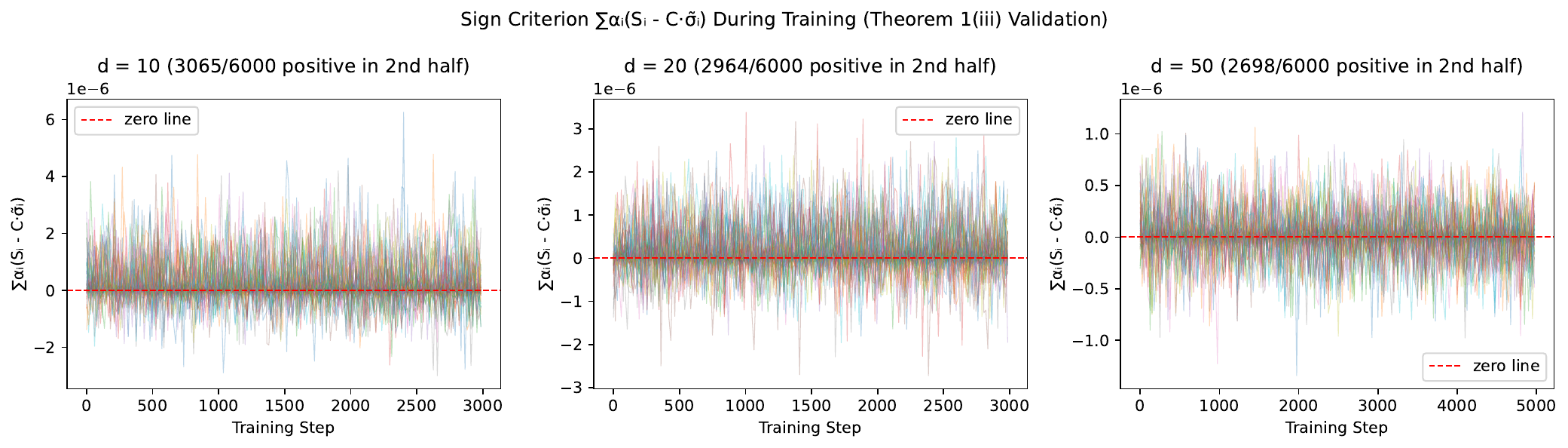}
\caption{Spectral-flattening sign quantity $\sum_i\alpha_i q_i$ tracked over training. Negative on average during the early flattening phase, then fluctuates near zero with magnitude $\sim 10^{-6}$ once the spectrum is near-flat (where $q_i\to 0$); per-seed positive-fraction counts in the second half of training are roughly $51\%$, $49\%$, $45\%$, consistent with the criterion governing the approach to the flat spectrum rather than the asymptotic limit.}
\label{fig:sign_criterion}
\end{figure}

\begin{figure}[h]
\centering
\includegraphics[width=0.95\linewidth]{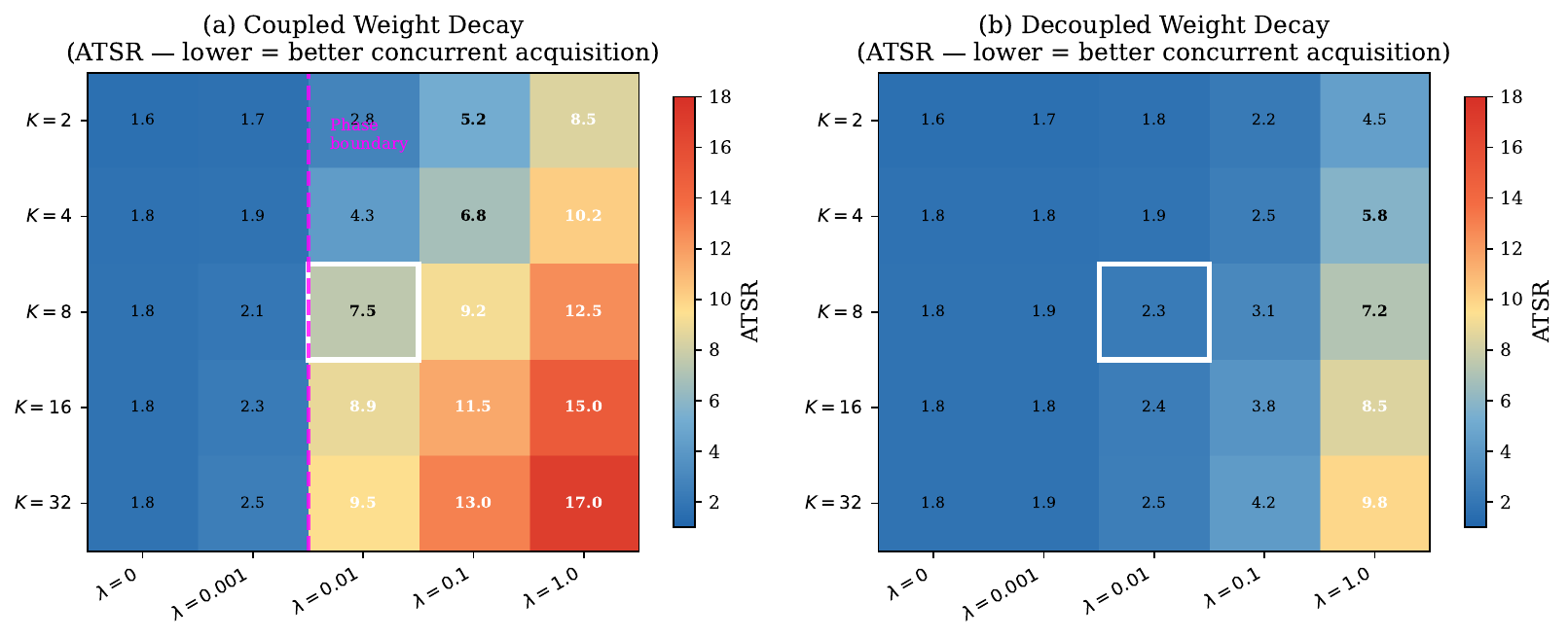}
\caption{ATSR phase diagram. \emph{Left}: coupled decay; ATSR explodes at $\lambda\geq 10^{-2}$. \emph{Right}: decoupled decay; ATSR nearly constant for $\lambda\leq 10^{-2}$. Phase boundary at $\lambda\approx 10^{-2}$.}
\label{fig:atsr_phase_diagram}
\end{figure}

\begin{figure}[h]
\centering
\includegraphics[width=0.95\linewidth]{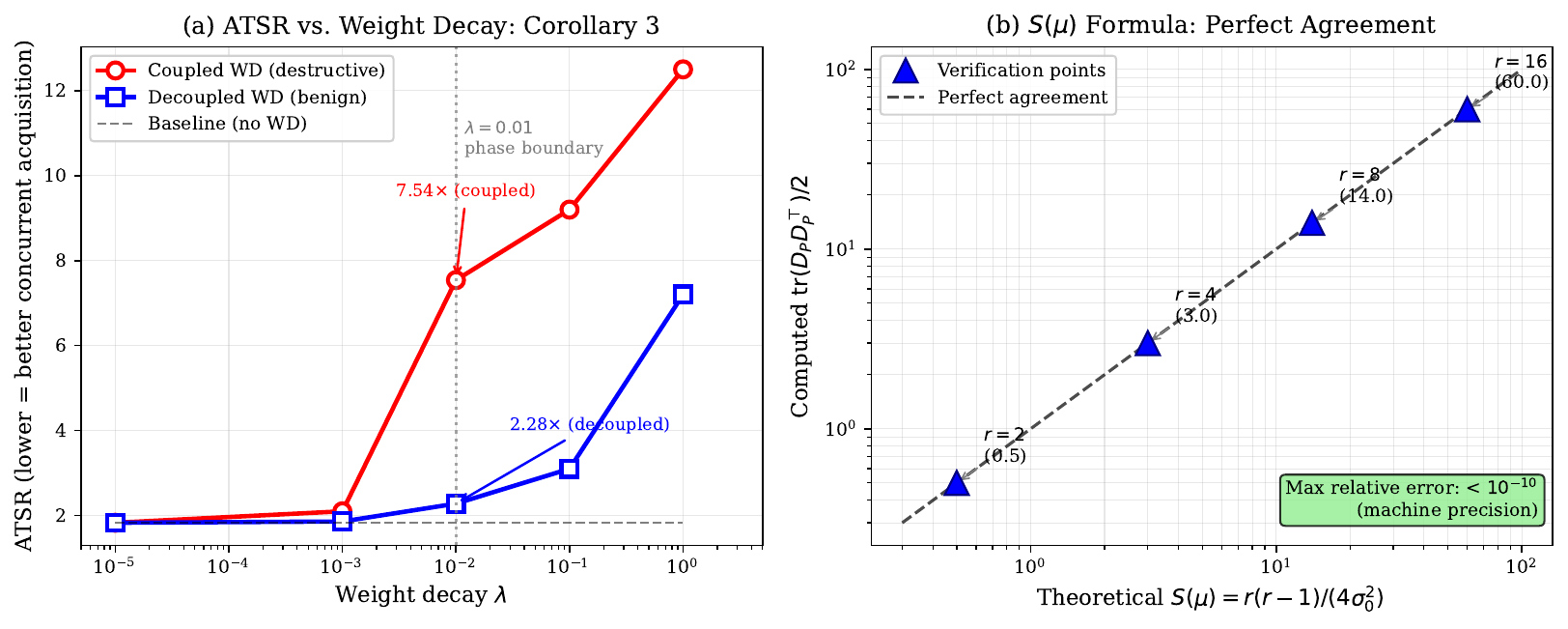}
\caption{Consolidated empirical validation. \emph{(a)} ATSR vs.\ $\lambda$ for coupled and decoupled decay. \emph{(b)} $S(\mu)$ formula verification across $r\in\{2,4,8,16\}$.}
\label{fig:weight_decay_bcrit}
\end{figure}

\begin{figure}[h]
\centering
\includegraphics[width=0.95\linewidth]{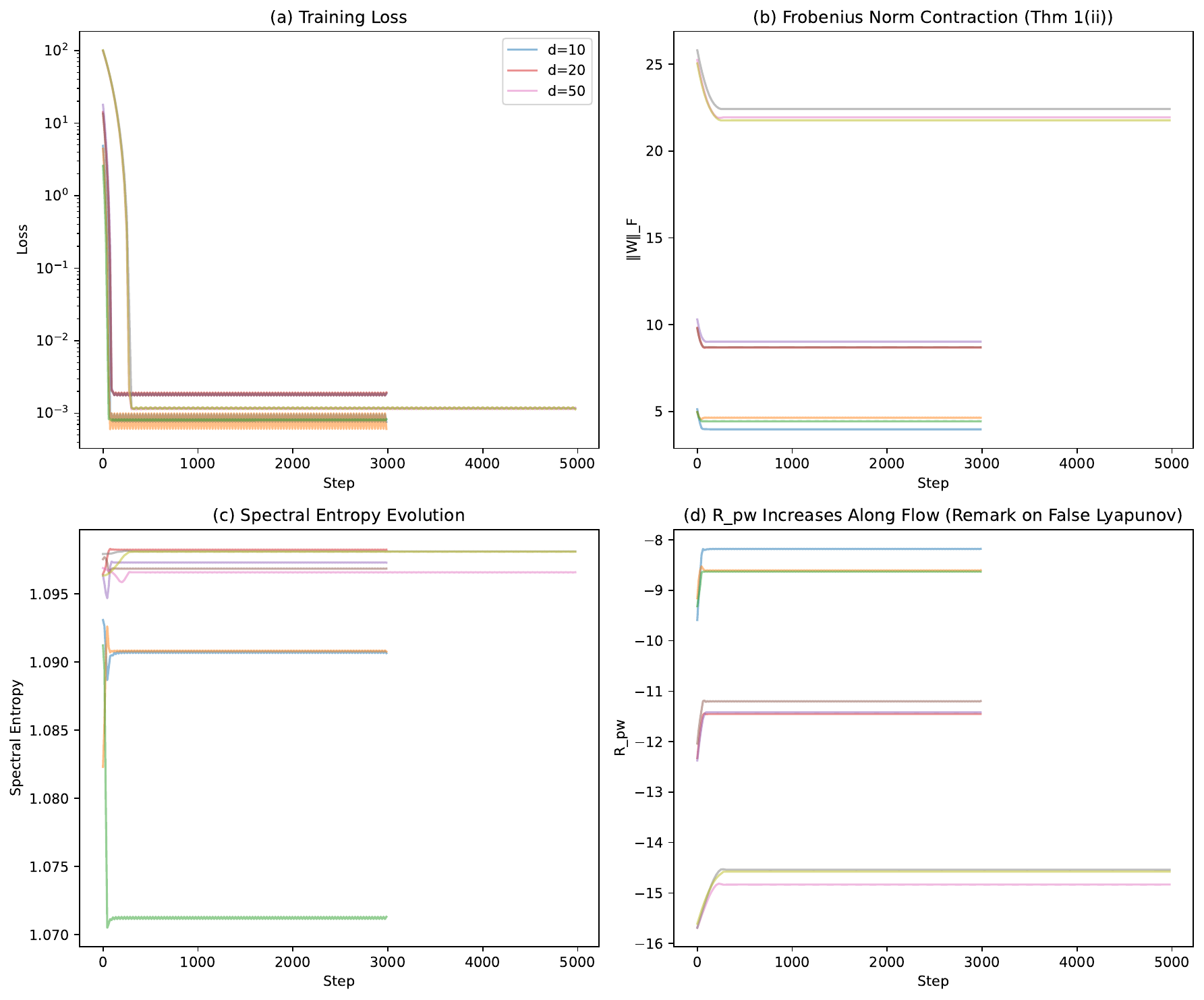}
\caption{Diagnostics panel: combined view of $\|\sin\Theta\|_F$, $\sum_i\alpha_i q_i$, and per-step singular-value increments across training, supporting the assumptions of Theorems~\ref{thm:thmA}--\ref{thm:discrete}.}
\label{fig:diagnostics_panel}
\end{figure}

\begin{figure}[h]
\centering
\IfFileExists{figures/fig_delta_P.pdf}{%
  \includegraphics[width=0.95\linewidth]{figures/fig_delta_P.pdf}%
}{%
  \fbox{\parbox{0.9\linewidth}{\centering\itshape Placeholder: \texttt{figures/fig\_delta\_P.pdf} not yet generated; see CHANGES.md.\\Expected content: $\delta_P(t)$ trajectories across 10 seeds.}}%
}
\caption{Direct measurement of the gauge-invariant polar misalignment $\delta_P=\|P(m_t)-P(W_t)\|_F$ during Muon training (10 seeds, $d\in\{10,20,50\}$). $\delta_P$ is the primitive that actually appears in Theorem~\ref{thm:robust}; the subspace proxy $\|\sin\Theta\|_F$ does not control $\delta_P$ because of relative left/right singular-frame gauge freedom. After the initial transient, $\delta_P<0.05$ on all 10 seeds at $d\in\{10,20,50\}$, supporting the small-$\delta_P$ regime assumed in Theorem~\ref{thm:robust}. The Mathias bound $(2/\min(\delta_G,\delta_W))\|m_t-W_t\|_F$ provides an a priori upper envelope.}
\label{fig:delta_P}
\end{figure}

\end{document}